
\documentclass{article}

\usepackage{microtype}
\usepackage{graphicx}
\usepackage{subfigure}
\usepackage{booktabs} 
\usepackage{fancyhdr} 
\usepackage{lastpage} 
\usepackage{extramarks} 
\usepackage{graphicx} 
\usepackage{lipsum} 
\usepackage{bm}
\usepackage{bbm}
\usepackage{mathtools}
\usepackage{amssymb}
\usepackage{amsthm}
\usepackage{hyperref}
\DeclareMathAlphabet{\pazocal}{OMS}{zplm}{m}{n}
\newcommand{\theHalgorithm}{\arabic{algorithm}}


\newcommand{\A}{\pazocal{A}}
\newcommand{\G}{\pazocal{G}}
\newcommand{\X}{\pazocal{X}}
\newcommand{\Y}{\pazocal{Y}}
\newcommand{\Z}{\pazocal{Z}}
\newcommand{\M}{\mathcal{M}}
\newcommand{\N}{\mathcal{N}}
\newcommand{\MM}{\pazocal{M}}
\newcommand{\RR}{\pazocal{R}}
\newcommand{\PP}{\mathcal{P}}
\newcommand{\B}{\pazocal{B}}
\newcommand{\F}{\pazocal{F}}
\newcommand{\K}{\pazocal{K}}
\newcommand{\Hh}{\pazocal{H}}
\newcommand{\Sp}{\pazocal{S}}

\newtheorem{theorem}{Theorem}[section]
\newtheorem{lemma}[theorem]{Lemma}

\newtheorem{proposition}[theorem]{Proposition}

\newtheorem{assumption}[theorem]{Assumption}
\theoremstyle{definition}
\newtheorem{definition}[theorem]{Definition}

\DeclareMathOperator{\pa}{pa}
\DeclareMathOperator{\neigh}{ne}
\DeclareMathOperator{\an}{an}

\makeatletter
\newcommand*{\indep}{%
  \mathbin{%
    \mathpalette{\@indep}{}%
  }%
}
\newcommand*{\nindep}{%
  \mathbin{
    \mathpalette{\@indep}{\not}
  }%
}
\newcommand*{\@indep}[2]{%
  \sbox0{$#1\perp\m@th$}
  \sbox2{$#1=$}
  \sbox4{$#1\vcenter{}$}
  \rlap{\copy0}
  \dimen@=\dimexpr\ht2-\ht4-.2pt\relax
  \kern\dimen@
  {#2}%
  \kern\dimen@
  \copy0 
} 
\makeatother

\usepackage[accepted]{icml2018}


\icmltitlerunning{Multi-Domain Translation by Learning Uncoupled Autoencoders}

\begin{document}

\twocolumn[
\icmltitle{                                                                                                                                                                                                                                                                                                                                                                Multi-Domain Translation by Learning Uncoupled Autoencoders
}
\icmlsetsymbol{equal}{*}

\begin{icmlauthorlist}
\icmlauthor{Karren D. Yang}{mit1,mit2}
\icmlauthor{Caroline Uhler}{mit1}
\end{icmlauthorlist}

\icmlaffiliation{mit1}{Laboratory for Information \& Decision Systems, and 
Institute for Data, Systems, and Society, 
Massachusetts Institute of Technology}
\icmlaffiliation{mit2}{Department of Biological Engineering, Massachusetts Institute of Technology}

\icmlcorrespondingauthor{Karren Yang}{karren@mit.edu}
\icmlcorrespondingauthor{Caroline Uhler}{cuhler@mit.edu}

\icmlkeywords{boring formatting information, machine learning, ICML}

\vskip 0.3in
]



\printAffiliationsAndNotice{}  

\begin{abstract}
Multi-domain translation seeks to learn a probabilistic coupling between marginal distributions that reflects the correspondence between different domains. We assume that data from different domains are generated from a shared latent representation based on a structural equation model. Under this assumption, we prove that the problem of computing a probabilistic coupling between marginals is equivalent to learning multiple uncoupled autoencoders that embed to a given shared latent distribution. In addition, we propose a new framework and algorithm for multi-domain translation based on learning the shared latent distribution and training autoencoders under distributional constraints. A key practical advantage of our framework is that new autoencoders (i.e., new domains) can be added sequentially to the model without retraining on the other domains, which we demonstrate experimentally on image as well as genomics datasets.
\end{abstract}

\section{Introduction}

Unsupervised translation between multiple domains is becoming increasingly popular in fields such as computer vision \cite{zhu2017unpaired} and computational biology \cite{mcdermott2018semi}. In these problems, one often has access to large quantities of unpaired data from different domains, and the objective is to learn a probabilistic coupling between the observed marginal distributions that reflects the correspondance between the domains.

We consider unsupervised multi-domain translation under the assumption of a shared latent representation of the different domains. This problem arises in settings where the different domains have some common structure or latent generative process. In computer vision, for instance when working with faces, one might consider the facial shape and attributes to be a higher level structure conserved across different populations (e.g. male, female, old, young.) while differences in other characteristics such as hair color are mutable between domains. Similarly, in genomics, one might consider relating different types of experimental data that are generated from the same latent representation of a cell population. A key difference between the image-to-image translation problem and translation between biological data modalities is that correspondence between different biological data usually cannot be enforced by the neural architecture such as through convolutions. Thus strategies for unsupervised translation of images based on sharing or transferring weights of neural networks cannot easily be translated to biological problems.

Many prominent methods for unsupervised translation including \cite{almahairi2018augmented, liu2017unsupervised, yi2017dualgan, kim2017learning} are based on the CycleGAN framework \cite{zhu2017unpaired}, which uses generative adversarial networks \cite{goodfellow2014generative}. The main idea is to train two generative networks to transport images between two domains such that (i) adversarial networks in both domains cannot discriminate between real and transported data, and (2) the translations are consistent with each other when composed, i.e. translating from \emph{domain 1} $\rightarrow$ \emph{domain 2} $\rightarrow$ \emph{domain 1} recovers the original data. However, CycleGAN does not use shared latent structure and only considers transport between one pair of domains at a time. More recently, \citet{liu2017unsupervised} considered image-to-image translation under the assumption of a shared latent representation by composing the GAN objective with two variational autoencoders that share weights. The model is based on the CycleGAN framework and performs transport between pairs of domains at a time, which limits scalability to multiple domains. 

\begin{figure*}[h]
\centering
\subfigure[]{\includegraphics[scale=.3]{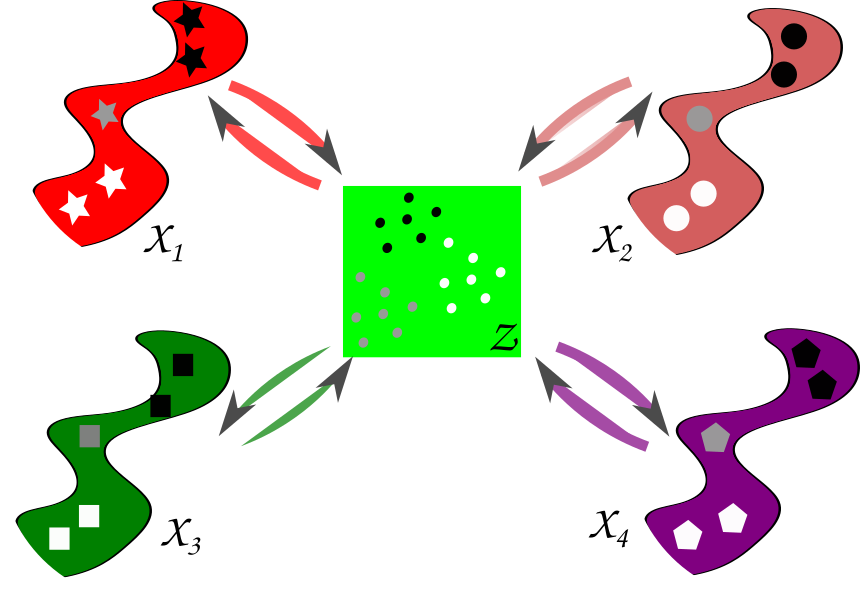}}\hspace{0.2cm}
\subfigure[]{\includegraphics[scale=.3]{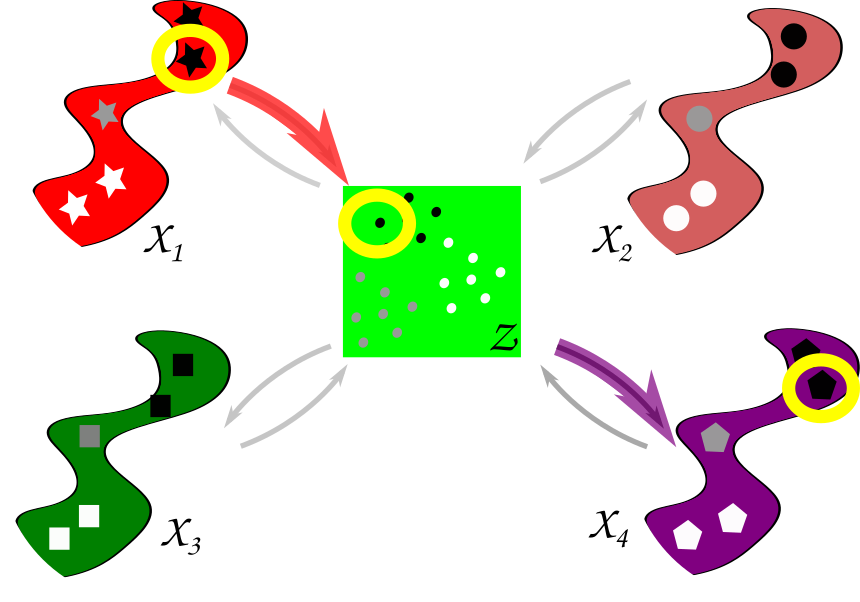}}
\vspace{-0.2cm}
\caption{ (a) The assumption that the different data domains share a latent representation is leveraged to decompose the multi-domain translation model between multiple domains into multiple autoencoders that can be trained separately. (b) To transport data between domains, we compose the encoder (red) of the source domain with the decoder (purple) of the target domain. 
}
\label{fig:intro}
\end{figure*}

Here we propose a novel framework for learning transport maps between multiple domains. The main idea is to leverage the assumption of a shared latent representation in order to decompose the multi-domain translation model between $k$ domains into $k$ \emph{uncoupled} autoencoders (Figure \ref{fig:intro}a). These autoencoders are trained separately and composed to translate between different pairs of domains (Figure \ref{fig:intro}b). This offers the following advantages:

\begin{itemize}
\itemsep0em
\item Modularity: $\binom{k}{2}$ pairwise transport maps between $k$ domains can be constructed from $k$ autoencoders.
\item Flexibility: since the $k$ autoencoders can be trained separately, the training procedure is more flexible -- for example, new domains can be added later without needing to retrain the model on the original domains.
\item Efficiency: while our framework uses adversarial training to match the data to a common latent space, the discriminators operate in the latent space and thus can have much simpler architectures as compared to models with discriminators in the original domain spaces.
\item Finally, our framework is general in the sense that it is not specifically designed for image-to-image translation. Autoencoders can be designed for transport between very different domains. We demonstrate this by applying our model to imaging and genomics data.
\end{itemize}


The paper is structured as follows. We begin by formalizing our assumption of a latent representation using structural equation models (Section \ref{sec:prelim}). Subsequently, we present our framework for multi-domain translation using uncoupled autoencoders (Section \ref{sec:main}), providing theoretical justification for our approach as well as practical algorithms. Finally, in Section \ref{sec:exp}, we demonstrate the efficacy of our algorithms on handwritten digits, CelebA faces and genomics datasets.

\section{Preliminaries and Related Work} \label{sec:prelim}

\subsection{Notation} Let $\X_1, \X_2, \cdots \X_k$ denote the data domains of interest, and let $\Z$ denote the domain of their shared latent representation. For most practical purposes it suffices to work in the reals; so we assume throughout that $\X_i\subseteq \mathbb{R}^{n_i}$ for all $i\in[k]:=\{1,\dots , k\}$ and $\Z=\mathbb{R}^d$. 
For any domain $\X$, we let $X$ denote a random variable in this domain with distribution $P_X$ and corresponding density function $p_X$. Finally we let $\bf{X}$ denote a tuple of variables $(X_1,\cdots,X_k)$. 

\subsection{Problem Setting} We consider the problem of unsupervised translation between multiple domains: given random variables $X_1, X_2, \cdots, X_k$ with marginal distributions $P_{X_1}, P_{X_2}, \cdots P_{X_k}$, learn a probabilistic coupling
 $P_{\bf{X}}$ between the marginals that reflects the correspondance between the variables. This can be formulated alternatively as a \emph{conditional generative modeling task}, in which the objective is to estimate the pairwise conditional distributions 
 $Q_{X_i|X_j}$ and $Q_{X_j|X_i}$ with density functions $q_{X_i|X_j}$ and $q_{X_j|X_i}$ under the constraint that they are consistent with the same probabilistic coupling for all $i, j \in [k]$. In the $k=2$ setting, for example, this would boil down to requiring that
\begin{align*}
p_{X_1, X_2}(x_1, x_2) &= q_{X_2|X_1}(x_2|x_1) p_{X_1}(x_1)\\
&= q_{X_1|X_2}(x_1|x_2) p_{X_2}(x_2)
\end{align*}
for all $x_1 \in \X_1,x_2 \in \X_2$.

\begin{figure}
\centering
\includegraphics[scale=.35]{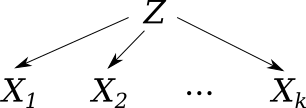}
\caption{Graphical depiction of the structural equation model underlying the probabilistic coupling $P_{\bf{X}}$.}
\label{fig:sem}
\end{figure}

There are various ways of defining a probabilistic coupling between $P_{X_1}, P_{X_2}, \cdots P_{X_k}$. In this work, we assume that the random variables $X_1, X_2, \cdots, X_k$ are generated independently from a common latent variable $Z$ as shown in Figure \ref{fig:sem}. Specifically:

\begin{assumption} \label{ass:sem} The random variables $X_1, \dots, X_k$ are generated by the following (causal) mechanism:
\begin{equation*} \label{eq:sem}
X_i = f_i(Z, N_i), \quad \forall i \in [k],
\end{equation*}
where $f_i$, $i\in[k]$ are injective functions, $Z$ is a latent variable with distribution $P_Z$, and $N_i$, $i\in[k]$, are independent noise variables with distribution $P_{N_i}$. 
\end{assumption}

This model is a \emph{structural equation model} and implies the following factorization of the joint distribution $P_{\bf X}$:
\begin{equation}
p_{{\bf{X}}} ({\bf{x}}) = \int_\Z \prod_{i=1}^k p_{X_i | Z} (x_i | z) p_Z(z) dz,
\end{equation}
where $p_Z$ is the probability density of $Z$, and $p_{X_i | Z}$ is the conditional distribution of $X_i$ given $Z$ that reflects the generative process. Note that the relationship between $X_i$ and $Z$ is allowed to be stochastic due to the noise variables in the structural equation model.

\subsection{Related Work}

{\bf CycleGAN.} Unsupervised transport between domains has been addressed by many others before us, particularly in the context of unsupervised image-to-image translation. To our knowledge, many prevalent approaches \cite{almahairi2018augmented, liu2017unsupervised, yi2017dualgan, kim2017learning} are based on the CycleGAN framework \cite{zhu2017unpaired}, which has proven to be very successful at training generative adversarial networks to translate between two unpaired domains. Specifically, \citet{zhu2017unpaired} proposed to represent the conditional distributions $Q_{X_1|X_2}$ and $Q_{X_2|X_1}$ between two domains with deterministic transport maps $\G^{21}: \X_2 \rightarrow \X_1$ and $\G^{12}: \X_1 \rightarrow \X_2$ parameterized by convolutional neural networks. These transport maps are trained to satisfy the following constraints:

\begin{enumerate}
\item The distribution of images $P_{X_1}$ under the transformation $\G^{12}$ must be indistinguishable from the distribution $P_{X_2}$, i.e. $\G^{12} \# P_{X_1} = P_{X_2}$. Similarly, the distribution of images $P_{X_2}$ under the mapping $\G^{21}$ must be indistinguishable from the distribution $P_{X_1}$, i.e. $\G^{21} \# P_{X_2} = P_{X_1}$. 
\item Composing the transport maps must result in the identity function, i.e., $\G^{21} \circ \G^{12}(x) = x$ for all $x \in \X_1$, and $\G^{12} \circ \G^{21}(x) = x$ for all $x \in \X_2$.
\end{enumerate}

While CycleGAN has seen great success in unsupervised image-to-image translation, it does not make use of a shared latent structure and only considers transport between one pair of domains at a time. More recently, \citet{choi2018stargan} proposed StarGAN to extend CycleGAN to the multi-domain setting by training a single generator to generate images from multiple domains. This model differs from ours as it does not use the assumption of a shared latent space; hence the model must be trained on all domains at the same time and applies primarily to data types that can use a common generator and discriminator for different domains.

{\bf Latent Space Assumption.} More closely related to our work are several papers that propose methods for unsupervised translation between different image domains under the assumption of a common latent space. For example, Coupled GAN \cite{liu2016coupled} trains two GANs with shared weights to learn a common representation of two domains. However, Coupled GAN is not designed for inference, and the training of networks is coupled, which is in contrast to our model where training can be uncoupled. In addition, \citet{liu2017unsupervised} composed the CycleGAN framework with variational autoencoders to concurrently learn the latent space representation as well as transport maps between two domains. However, the training of the two variational autoencoders is coupled due to weight-sharing and the CycleGAN objective, whch is in contrast to our model where training can be uncoupled over multiple autoencoders.

{\bf Regularized Autoencoders.} There are several forms of regularized autoencoders that are designed for generative modeling by matching distributions in the latent space. Variational autoencoders \cite{kingma2013auto} minimize the KL-divergence between generated and real images by maximizing the evidence lower bound. However, this framework assumes that the latent variable follows a Gaussian distribution. In our work we use a form of the Wasserstein autoencoder (WAE) \cite{tolstikhin2017wasserstein}, which is a theoretically motivated generalization of the adversarial autoencoder \cite{makhzani2015adversarial}. Both the adversarial autoencoder and the discriminator-based WAE train an adversary in the latent space to match encoded data samples to the real latent distributions. The recent Sinkhorn autoencoder uses the Sinkhorn divergence instead of discriminative loss to perform distribution matching in the latent space \cite{patrini2018sinkhorn}. However, none of these works address multi-domain translation, which our framework will address both in theory and in practice.

\section{Proposed Framework} \label{sec:main}

In this section, we present our new framework for performing unsupervised translation between multiple domains under the assumption that their coupling is given by the structural equation model in Assumption \ref{eq:sem}. We provide our theoretical results in the setting where the distribution $P_Z$ is known in Section~\ref{sec_PZ_known} and then provide algorithms for the general setting in Section~\ref{sec_PZ_unknown}.

\subsection{Multi-domain transport with known $P_Z$}
\label{sec_PZ_known}

\begin{figure}[!t]
\centering
\includegraphics[scale=.3]{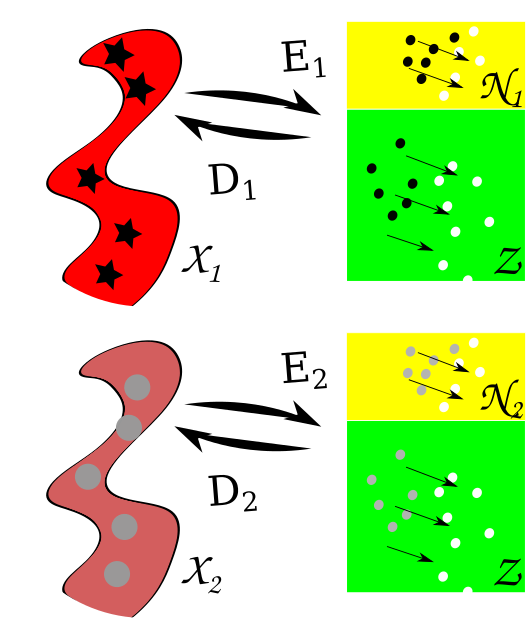}
\caption{Schematic of training uncoupled regularized autoencoders: the encoder is forced to match the encoded data distribution to the latent distribution; at the same time, both the encoder and the decoder must reconstruct samples from the data distribution.}
\label{fig:AE}
\end{figure}

In this section, we show that when the latent distribution $P_Z$ is known, then learning a probabilistic coupling between the marginals $P_{X_1}, P_{X_2}, \cdots, P_{X_k}$ under Assumption \ref{ass:sem} can be solved by learning multiple \mbox{\emph{\bf{uncoupled}}} autoencoders (see Figure \ref{fig:AE}
). Specifically, for each domain $i \in [k]$, we propose training a regularized encoder-decoder pair $(E_i:\X \rightarrow \Z \times \N_i, \;D_i: \Z \times \N_i \rightarrow \X)$ to minimize:
\begin{align} \label{eq:AE}
\mathbb{E}_{x \sim P_{X_i}} \left[ L_1(x, D_i\circ E_i(x)) + \lambda L_2(E_i \# P_{X_i} | P_{Z, N_i}) \right],
\end{align}
where $\lambda>0$ is a hyperparameter, $L_1$ is the Euclidean metric, and $L_2$ represents a divergence between probability distributions. Concretely, $L_1$ penalizes the \emph{reconstruction loss} of the autoencoder while $L_2$ penalizes the \emph{divergence} in the latent space between $E_i \# P_{\X_i}$, the encoded distribution of $P_{\X_i}$, and the latent generating distribution $P_{Z, N_i}:=P_{Z} \otimes P_{N_i}$. 
Translation from domain $i$ to $j$ is accomplished by composing the encoder from the source domain with the decoder from the target domain, i.e. taking
\begin{equation} \label{eq:x_trans}
X_{i \rightarrow j} := D_j(\pi^Z(E_i(X_i)), N_j)
\end{equation}
where $ X_i \sim P_{X_i}$, $N_j \sim P_{N_j}$, and the projection $\pi^Z: (z, n) \mapsto z$ restricts the output of the encoder to the domain of $Z$; see Figure~\ref{fig:AE_trans} for an illustration. Note that the role of the noise variables is to introduce \emph{stochasticity} into the mapping, i.e. our framework can handle stochastic mappings similar to \citet{almahairi2018augmented}.

\begin{figure}[h]
\centering
\includegraphics[scale=.3]{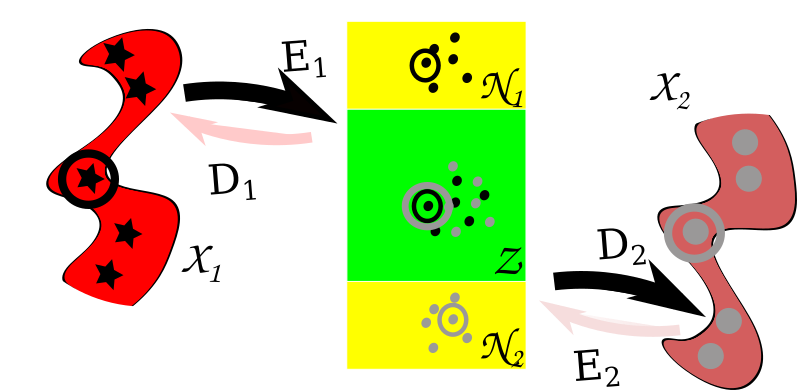}
\caption{Schematic of performing transport from domain $1$ to domain $2$: first $E_1$ is used to encode a sample from domain $1$ to a latent variable; subsequently $D_2$ is used to decode the latent variable to domain $2$.}
\label{fig:AE_trans}
\end{figure}

{\bf Implementation.} We parameterize $(E_i, D_i)$ using neural networks and minimize the objective function in (\ref{eq:AE})  via stochastic gradient updates as shown in Algorithm~\ref{alg:AE}. In particular, we choose $L_2$ to be the discriminative loss,
\begin{equation*}
L_2(P|Q) := \max_f \mathbb{E}_{x \sim P} \log f(x) + \mathbb{E}_{x \sim Q} \log (1-f(x)).
\end{equation*}
The resulting objective function is equivalent to the adversarial autoencoder \cite{makhzani2015adversarial} and the GAN-based WAE \cite{tolstikhin2017wasserstein}, which are methods for generative modeling based on autoencoding. Unlike those works, the ultimate goal of training the autoencoders for us is to compose them for multi-domain translation. 

\begin{algorithm}[H]
\begin{algorithmic}
   \STATE {\bfseries Input:} Distributions $P_Z, \{P_{X_i}\}_{i \in [k]}, \{P_{N_i} \}_{i \in [k]}$, initialized autoencoders $(E_{\theta_i}, D_{\phi_i})_{i \in [k]}$, discriminator $f_\omega$
   \STATE {\bfseries Output:} Updated autoencoders $(E_{\theta_i}, D_{\phi_i})_{i \in [k]}$
   \FOR{$i \in [k]$}
      \WHILE{$(E_{\theta_i}, D_{\phi_i})$ not converged}
      \STATE Sample $x_1, \cdots, x_N$ from $P_{X_i}$
      \STATE Sample $z_1, \cdots, z_N$ from $P_Z$
      \STATE Sample $n_1, \cdots, n_N$ from $P_{N_i}$

      \STATE Update $E_{\theta_i}, D_{\phi_i}$ by gradient descent on%
      \[ \frac{1}{N} \sum_{j=1}^N ||x_j - D_{\phi_i}(E_{\theta_i}(x_j)||_2^2 + \lambda \log f_\omega(E_{\theta_i}(x_j))\]%
      \STATE Update $f_\omega$ by gradient ascent on%
      \[\frac{1}{N} \sum_{j=1}^N \left[\log f_\omega(E_{\theta_i}(x_j)) + \log (1-f_\omega(z_j, n_j)) \right]\]%
      \ENDWHILE
   \ENDFOR
\end{algorithmic}
\caption{Training of autoencoders for multi-domain transport}
	\label{alg:AE}
\end{algorithm}


{\bf Theoretical properties.} To justify our approach, we show that the optimal solutions to (\ref{eq:AE}), which are tuples of autoencoders $(E_i, D_i)_{i \in [k]}$ over every domain, satisfy the properties of \emph{consistency} and \emph{completeness} under Assumption~\ref{ass:sem}. Before formalizing these two concepts, we introduce some notation and definitions. Let $Q_{Z|X_i}$ and $Q_{X_i|Z}$ be the conditional probability distributions induced by the encoder $E_i$ and decoder $D_i$ respectively, i.e. $\forall i \in [k]$, the corresponding densities are given by,
\[
q_{X_i|Z}(x|z) := \int_{\N_i} \delta_{D_i(z,n) = x} ~dP_{N_i}(n) \quad \forall z \in \Z
\]
and
\[
q_{Z|X_i}(z|x) := \delta_{\pi^Z(E_i(x)) = z}  \quad \forall x \in \X_i,
\]
where $\delta$ is the Dirac delta. Subsequently, we can define the conditional probability distribution $Q_{X_j | X_i}$ induced by composing $E_i$ and $D_j$ for any $i, j \in [k]$ as follows:
\[
q_{X_j|X_i}(x'|x) := \int_{\Z} q_{X_j|Z}(x'|z) q_{Z|X_i}(z|x) dz \quad \forall x \in \X_i.
\]
With this notation we discuss two notions of consistency.

\begin{definition}[Path consistency]
Let $(i_1, \cdots, i_{\ell})$ denote a sequence of domains. A tuple of autoencoders $(E_i, D_i)_{i \in [k]}$ is \emph{path-consistent} if for every sequence $(i_1, \cdots, i_{\ell})$,
\begin{align*}
\int_{\X_{i_{\ell-1}}} ... \int_{\X_{i_2}}
&\prod_{j'=1}^{\ell-1} q_{X_{i_{j'+1}}|X_{i_{j'}}}(x_{j'+1}|x_{j'})
 dx_2...dx_{\ell-1} \\
&= q_{X_{i_\ell}|X_{i_1}}(x_{\ell}|x_1). 
\end{align*}
\end{definition}
Path-consistency implies that any sequence of encodings and decodings starting in domain $i_1$ and ending in domain $i_{\ell}$ induces the same conditional distribution.

\begin{definition}[Global Consistency]
Let $Q^{(i)}$ be the joint distribution over $X_1, \cdots, X_k$ with density given by
\begin{align*}
q^{(i)}(\bm{x}) := \int_{\Z} \prod_{j \neq i} q_{X_j|Z}(x_j|z)q_{Z|X_i}(z|x_i) p_{X_i}(x_i) dz.
\end{align*}
A tuple of autoencoders $(E_i, D_i)_{i \in [k]}$ satisfies \emph{global consistency} if $Q^{(1)} = Q^{(2)} = \cdots = Q^{(k)}$. 
\end{definition}

Intuitively, global consistency means that the joint probability distribution generated by encoding $X_i$ using $E_i$ and decoding the resulting latent variable to all other domains ${j \in [k], j \neq i}$ using $D_j$ is equivalent for any source domain $i \in [k]$. These notions of consistency are generalizations of cycle-consistency \cite{zhu2017unpaired} to the probabilistic and multi-domain settings. The following proposition, whose proof is given in the Supplementary Material, states that our objective in (\ref{eq:AE}) satisfies these consistency properties.

\begin{proposition} \label{the:sym}
Under Assumption \ref{ass:sem}, every optimal solution of the objective in (\ref{eq:AE}) satisfies path and global consistency.
\end{proposition}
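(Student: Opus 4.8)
The plan is to first turn optimality of (\ref{eq:AE}) into two clean structural identities for each autoencoder, and then to derive both consistency notions from these by essentially mechanical computations. Since $L_1 \ge 0$ and $L_2 \ge 0$ with $L_2$ vanishing exactly when its two arguments agree, the objective is bounded below by $0$; under Assumption~\ref{ass:sem} this bound is attained by taking $E_i = f_i^{-1}$ and $D_i = f_i$, which is well defined because $f_i$ is injective. Hence I would characterize every optimal solution by the two conditions (a) perfect reconstruction, $D_i \circ E_i = \mathrm{id}$ holding $P_{X_i}$-almost everywhere, and (b) exact latent matching, $E_i \# P_{X_i} = P_{Z,N_i} = P_Z \otimes P_{N_i}$. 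Writing $E_i = (g_i, e_i^N)$ with $g_i = \pi^Z \circ E_i$, so that $q_{Z|X_i}(z|x) = \delta_{g_i(x) = z}$, all subsequent arguments use only (a) and (b).

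From (a) and (b) I would extract two auxiliary facts. First, a \emph{decode-then-encode} identity: condition (a) forces $E_i$ to be injective on $\mathrm{supp}(P_{X_i})$ with measurable left inverse $D_i$, while (b) says $E_i$ pushes $P_{X_i}$ onto $\mathrm{supp}(P_{Z,N_i})$; consequently $D_i$ is a genuine inverse of $E_i$ on the image, giving $E_i \circ D_i = \mathrm{id}$ there and in particular $g_i(D_i(z,n)) = z$ for $P_{Z,N_i}$-almost every $(z,n)$. Second, a \emph{posterior} identity: because the target $P_Z \otimes P_{N_i}$ in (b) is a product, the coordinates $g_i(X_i)$ and $e_i^N(X_i)$ are independent with $e_i^N(X_i) \sim P_{N_i}$, so conditioning on $g_i(X_i) = z$ leaves $e_i^N(X_i) \sim P_{N_i}$; combined with (a), the law of $X_i = D_i(z, e_i^N(X_i))$ given $g_i(X_i) = z$ is exactly the decoder conditional $q_{X_i|Z}(\cdot|z)$. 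I expect this paragraph to be the main obstacle: optimality does \emph{not} pin down $E_i = f_i^{-1}$ (there is a gauge freedom in how each domain embeds into $\Z$), so the two identities must be argued intrinsically from (a)--(b), with appropriate almost-everywhere care.

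For path consistency I would proceed by induction on the path length $\ell$, using that $q_{X_j|X_i}(x'|x) = \int_\Z q_{X_j|Z}(x'|z)\,\delta_{g_i(x)=z}\,dz = q_{X_j|Z}(x'|g_i(x))$. In the inductive step I integrate out the intermediate domain $x_m$: the samples $x_m$ drawn from $q_{X_{i_m}|Z}(\cdot|z)$ are decoder outputs $D_{i_m}(z,n)$, so the decode-then-encode identity gives $g_{i_m}(x_m) = z$ almost surely, which collapses the re-encoded latent back to $z$ and reduces a length-$\ell$ composition to a length-$(\ell-1)$ one. Iterating telescopes the whole path down to $q_{X_{i_\ell}|Z}(\cdot \mid g_{i_1}(x_1)) = q_{X_{i_\ell}|X_{i_1}}$.

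For global consistency I would realize $q^{(i)}$ as the marginal over $(X_1,\dots,X_k)$ of a joint distribution on $(Z, X_1, \dots, X_k)$ built by drawing $x_i \sim P_{X_i}$, setting $z = g_i(x_i)$, and decoding independently to every $j \ne i$ via $q_{X_j|Z}(\cdot|z)$. By (b) the $Z$-marginal of this joint is $g_i \# P_{X_i} = P_Z$ for every $i$, and given $Z = z$ the coordinates $X_j$ with $j \ne i$ are independent with law $q_{X_j|Z}(\cdot|z)$. The posterior identity handles the remaining source coordinate: conditionally on $Z = z$, the law of $X_i$ is also $q_{X_i|Z}(\cdot|z)$ and is independent of the others. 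Thus for every $i$ this joint is the \emph{same} distribution, namely $Z \sim P_Z$ with $X_1, \dots, X_k$ conditionally independent given $Z$ and $X_l \mid Z = z \sim q_{X_l|Z}(\cdot|z)$; its marginal over $(X_1,\dots,X_k)$ is therefore independent of $i$, which is precisely $Q^{(1)} = \cdots = Q^{(k)}$.
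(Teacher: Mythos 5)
Your proposal is correct and follows essentially the same route as the paper: you reduce optimality to zero reconstruction loss plus exact latent matching (attainable via the true generative maps since each $f_i$ is injective), deduce that $E_i$ and $D_i$ are mutually inverse on full-measure sets, collapse the intermediate Dirac deltas for path consistency, and establish global consistency through the identity $Q_{X_i|Z}\otimes P_Z = Q_{Z|X_i}\otimes P_{X_i}$ (your ``posterior identity'' is exactly the paper's joint-measure equality, just phrased in terms of conditional laws rather than indicator-function computations). No gaps worth noting.
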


In general, the optimal solution of the objective in  (\ref{eq:AE}) is not unique because there can potentially be multiple ways to map between $Z$ and $X_i$. However, we can guarantee that the true probabilistic coupling under Assumption \ref{ass:sem} is represented by some solution in the optimal set. This is formalized in the following proposition, whose proof can again be found in the Supplementary Material.

\begin{proposition}[Completeness] \label{the:exp}
For any probabilistic coupling $P_{{\bf X}}$ satisfying Assumption~\ref{ass:sem}, there exists a tuple of autoencoders $(E_i, D_i)_{i \in [k]}$ optimizing (\ref{eq:AE}) such that $P_{\bf X} = Q_i\,$ for all $i \in [k]$, where $Q_i$ is defined as in Proposition \ref{the:sym}.
\end{proposition}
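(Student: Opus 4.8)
The plan is to exhibit an explicit optimal tuple built directly from the generative mechanism of Assumption~\ref{ass:sem} and then verify that it recovers $P_{\bf X}$. For each $i \in [k]$ I would take the decoder to be the true mechanism, $D_i(z,n) := f_i(z,n)$, and the encoder to be its inverse, $E_i := f_i^{-1}$, which is well defined on the image $f_i(\Z \times \N_i)$ — a set of full measure under $P_{X_i}$ — precisely because Assumption~\ref{ass:sem} requires each $f_i$ to be injective. With this choice the induced decoder density $q_{X_i|Z}(\cdot\,|z) = \int_{\N_i} \delta_{D_i(z,n)=\cdot}\, dP_{N_i}(n)$ coincides with the true conditional $p_{X_i|Z}$, and the encoder realizes the deterministic map $x \mapsto (z,n) = f_i^{-1}(x)$.

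First I would check that this tuple is optimal for (\ref{eq:AE}). Since $L_1 \geq 0$ and any divergence satisfies $L_2 \geq 0$, the objective is bounded below by $0$, so it suffices to show both terms vanish. The reconstruction term is zero because $D_i \circ E_i = f_i \circ f_i^{-1} = \mathrm{id}$ on the support of $P_{X_i}$. The latent term is zero because, writing $P_{X_i} = f_i \# P_{Z,N_i}$ directly from the structural equations, we obtain $E_i \# P_{X_i} = (f_i^{-1} \circ f_i)\# P_{Z,N_i} = P_{Z,N_i}$, hence $L_2(E_i \# P_{X_i} \,|\, P_{Z,N_i}) = 0$. Thus the constructed tuple attains the global minimum and lies in the optimal set.

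The remaining, and key, step is to show $Q^{(i)} = P_{\bf X}$ for every $i$. The decisive observation is that injectivity of $f_i$ makes $Z$ a \emph{deterministic} function of $X_i$: since $x_i = f_i(z,n_i)$ has a unique preimage, the true posterior is the point mass $p_{Z|X_i}(z|x_i) = \delta_{z = \pi^Z(f_i^{-1}(x_i))}$, which is exactly the encoder-induced conditional $q_{Z|X_i}$. Combining this with $q_{X_j|Z} = p_{X_j|Z}$ for all $j$ and rewriting the true joint via Bayes' rule as
\begin{align*}
p_{\bf X}(\bm x) = \int_\Z \prod_{j \neq i} p_{X_j|Z}(x_j|z)\, p_{Z|X_i}(z|x_i)\, p_{X_i}(x_i)\, dz,
\end{align*}
I would match this term-by-term against the definition of $q^{(i)}$ to conclude $q^{(i)} = p_{\bf X}$, and therefore $Q^{(1)} = \cdots = Q^{(k)} = P_{\bf X}$. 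I expect the main obstacle to be the measure-theoretic bookkeeping around the degenerate posterior — specifically, justifying that injectivity forces the latent posterior to be a point mass and that this point mass agrees with the deterministic encoder on a full-measure set — rather than any of the surrounding algebra, which is routine once the canonical construction is in place.
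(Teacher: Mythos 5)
Your proposal is correct and follows essentially the same route as the paper: take $D_i = f_i$ and $E_i$ to be its inverse on a full-measure subset of $\X_i$ (well defined by injectivity), check that both loss terms vanish so the tuple is optimal, and then observe that the induced conditionals coincide with the true ones so that each $Q^{(i)}$ equals $P_{\bf X}$. The only cosmetic difference is that you verify $Q^{(i)} = P_{\bf X}$ directly via Bayes' rule and the point-mass posterior, whereas the paper routes the last step through its global-consistency proposition; the underlying computation is the same.
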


The consistency and completeness properties hold generally in the non-parametric setting, i.e.~when the autoencoders that we optimize over have sufficiently large capacity to fit the data distributions and optimize the objective function in (\ref{eq:AE}) to zero. In practice, however, the autoencoders are constrained by parameterization using neural networks and the objective function may not be optimized perfectly. The following result provides a bound on transport error in terms of autoencoder reconstruction and quality of latent encodings, and applies to the case where the class of autoencoders does not fit the latent and data distributions perfectly.


\begin{proposition} Let $Q_{X_{i \rightarrow j}}$ denote the distribution of $X_{i \rightarrow j}$ as defined in (\ref{eq:x_trans}). If the decoder $D_j$ is $\gamma_j$-Lipschitz, then the 1-Wasserstein distance $W(Q_{X_{i \rightarrow j}}, P_{X_j})$ satisfies
\begin{align*}
W(Q_{X_{i \rightarrow j}}, P_{X_j}) &\leq \gamma_j W(E_i \# P_{X_i}, P_Z \otimes P_{N_i})\\
 &\quad+ \gamma_j W(P_Z \otimes P_{N_j}, E_j \# P_{X_j}) \\
 &\quad+ \mathbb{E}_{x \sim P_{X_j}} L_1(x, D_j\circ E_j(x)).
\end{align*}
\end{proposition}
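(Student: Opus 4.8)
The plan is to bound the transport error by inserting the \emph{generative} distribution of domain $j$, namely $\widetilde{P}_j := D_j \# (P_Z \otimes P_{N_j})$, as an intermediate point and applying the triangle inequality for the $1$-Wasserstein metric,
\[
W(Q_{X_{i \rightarrow j}}, P_{X_j}) \leq W(Q_{X_{i \rightarrow j}}, \widetilde{P}_j) + W(\widetilde{P}_j, P_{X_j}).
\]
Two elementary facts about $W$ then do the rest. First, if a map $f$ is $\gamma$-Lipschitz, pushing forward contracts distances, $W(f \# \mu, f \# \nu) \leq \gamma\, W(\mu, \nu)$, which one sees by transporting along $(f \times f)\#\pi$ for an optimal coupling $\pi$ of $\mu,\nu$. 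Second, for any map $g$ the graph coupling $x \mapsto (x, g(x))$ gives $W(\mu, g \# \mu) \leq \mathbb{E}_{x \sim \mu}\|x - g(x)\|$.

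For the first term I would note that $Q_{X_{i \rightarrow j}} = D_j \#\bigl[(\pi^Z \# E_i \# P_{X_i}) \otimes P_{N_j}\bigr]$, since $X_{i \rightarrow j}$ feeds the $Z$-component $\pi^Z(E_i(X_i))$ together with fresh independent noise $N_j$ into $D_j$, whereas $\widetilde{P}_j = D_j \#[P_Z \otimes P_{N_j}]$. The $\gamma_j$-Lipschitz property of $D_j$ then gives $W(Q_{X_{i \rightarrow j}}, \widetilde{P}_j) \leq \gamma_j\, W\bigl((\pi^Z \# E_i \# P_{X_i}) \otimes P_{N_j},\, P_Z \otimes P_{N_j}\bigr)$. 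Because the two product measures share the factor $P_{N_j}$, coupling that factor by the identity removes it at no cost under the Euclidean product metric, yielding the bound $\gamma_j\, W(\pi^Z \# E_i \# P_{X_i}, P_Z)$. Finally, since the projection $\pi^Z$ is $1$-Lipschitz and $\pi^Z \#(P_Z \otimes P_{N_i}) = P_Z$, a second application of the contraction fact replaces the $Z$-marginals by the full encoded and latent distributions, giving $\gamma_j\, W(E_i \# P_{X_i}, P_Z \otimes P_{N_i})$, the first term of the claim.

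For the second term I would insert the reconstruction distribution $(D_j \circ E_j)\# P_{X_j} = D_j \#(E_j \# P_{X_j})$ and split again, $W(\widetilde{P}_j, P_{X_j}) \leq W\bigl(D_j \#(P_Z \otimes P_{N_j}),\, D_j \#(E_j \# P_{X_j})\bigr) + W\bigl((D_j \circ E_j)\# P_{X_j},\, P_{X_j}\bigr)$. The first piece is bounded by $\gamma_j\, W(P_Z \otimes P_{N_j}, E_j \# P_{X_j})$ via the contraction fact, and the second by the reconstruction loss $\mathbb{E}_{x \sim P_{X_j}} L_1(x, D_j \circ E_j(x))$ via the graph coupling $x \mapsto (x, D_j \circ E_j(x))$. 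Summing the two bounds reproduces exactly the three terms of the proposition.

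The argument is essentially a careful chain of triangle inequalities, so no single step is deep; the points that require care are the bookkeeping of the noise variables. One must track that the transport map uses only the $Z$-component $\pi^Z(E_i(X_i))$ together with independent fresh noise $N_j$, which is precisely why both the product-measure reduction and the $\pi^Z$-projection step are needed to convert a statement about $Z$-marginals into the stated bound involving the joint distributions $E_i \# P_{X_i}$ and $P_Z \otimes P_{N_i}$. Verifying that the identity coupling on the shared noise factor incurs no cost relies on $L_1$ being the Euclidean metric on the product space, so that $\|(z,n) - (z',n)\| = \|z - z'\|$.
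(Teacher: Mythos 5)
Your proof is correct and is essentially the paper's argument: both are the same chain of triangle inequalities through the intermediate distributions $D_j\#(P_Z\otimes P_{N_j})$ and $(D_j\circ E_j)\# P_{X_j}$, combined with the Lipschitz pushforward contraction and the graph-coupling bound for the reconstruction term. You merely group the splits in a slightly different order and spell out the noise-bookkeeping steps (identity coupling on the shared $P_{N_j}$ factor and the $1$-Lipschitz projection $\pi^Z$) that the paper compresses into a single line.
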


The first two terms in this bound indicate how far the encoded distributions $P_{X_i}$ and $P_{X_j}$ are from the latent distribution $P_{Z}$; the last term is the reconstruction loss of the autoencoder for domain $j$. For a proof, see the Supplementary Material.

\subsection{Multi-domain transport with unknown $P_Z$}
\label{sec_PZ_unknown}

\begin{figure*}[h]
\centering
\subfigure[]{\includegraphics[scale=.2]{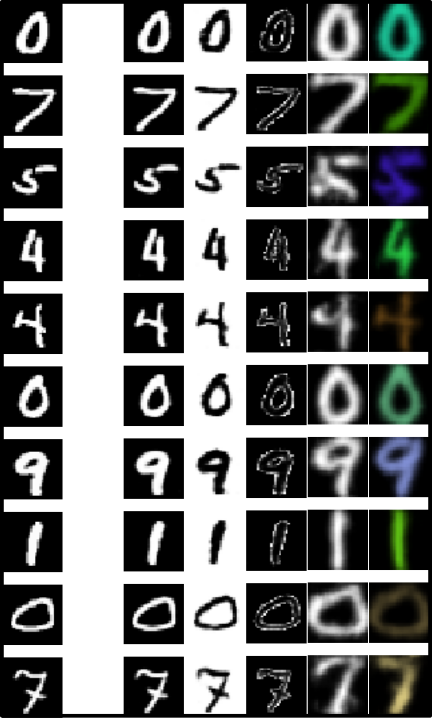}} \hspace{0.5cm}
\subfigure[]{\includegraphics[scale=.2]{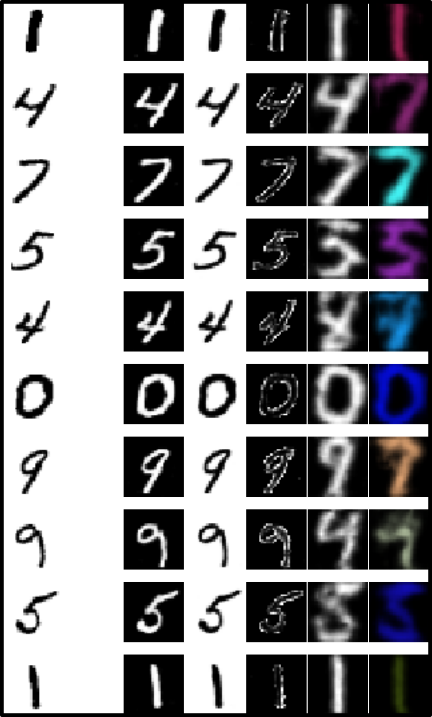}} \hspace{0.5cm}
\subfigure[]{\includegraphics[scale=.2]{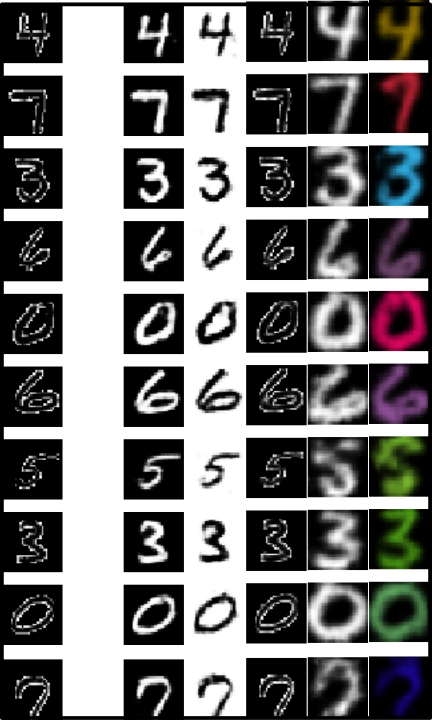}} \hspace{0.5cm}
\subfigure[]{\includegraphics[scale=.2]{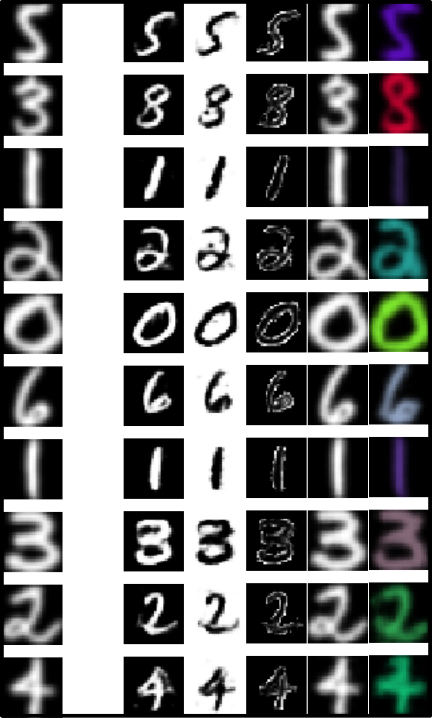}} \hspace{0.5cm}
\subfigure[]{\includegraphics[scale=.2]{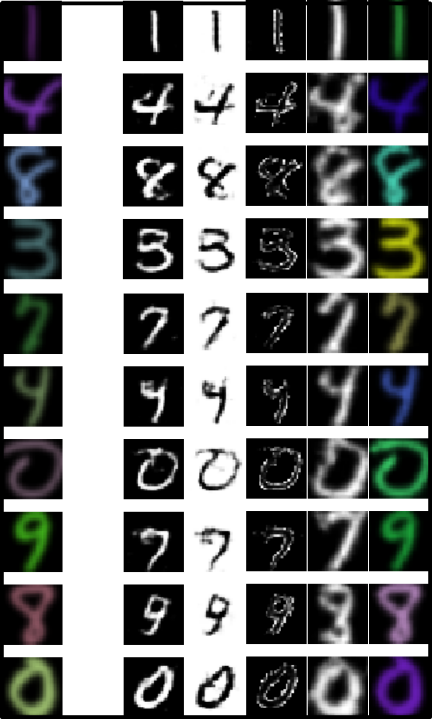}}\hspace{0.5cm}
\subfigure[]{\includegraphics[scale=.2]{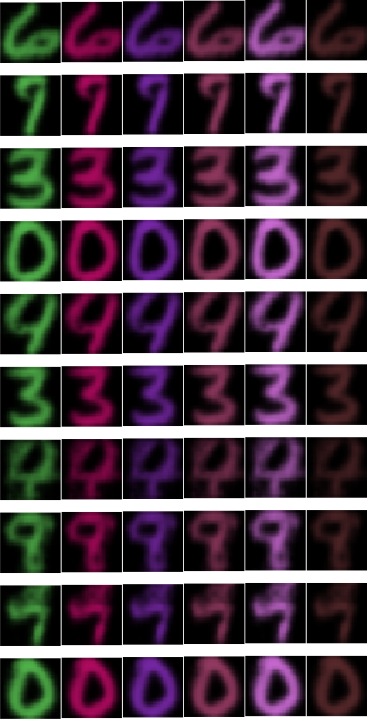}}
\vspace{-0.2cm}
\caption{Examples of transport results from MNIST (a), inverted-MNIST (b), edge-MNIST (c), USPS (d), and colorized USPS (e) to the other domains. The first column of each subfigure denotes the input image that is encoded to the latent space. Then from left to right we have the decodings to the MNIST, inverted-MNIST, edge-MNIST, USPS, and colorized USPS domains. (f) Illustration of different latent samples (rows) composed with different noise vectors (columns) decoded to the colorized USPS domains.}
\label{fig:digits}
\end{figure*}

To extend the previous results to the setting where $P_Z$ is unknown, we propose a two-step method: first estimate a suitable latent representation, and subsequently apply Algorithm \ref{alg:AE} to learn an autoencoder for each domain. 


A straight-forward approach for learning the latent distribution $P_Z$ is to train a regularized autoencoder on data from a single representative domain. However, such a representation could potentially capture variability that is specific to that one domain. 
To learn a more invariant latent representation, we propose the following extension of our autoencoder framework. The basic idea is to alternate between training multiple autoencoders until they agree on a latent representation that is effective for their respective domains. This is particularly relevant for applications to biology; for example, often one is interested in learning a latent representation that integrates all of the data modalities. 

In practice, we learned the latent distribution based on two domains $i, j \in [k]$ as follows. Let $\hat{P}_{Z_{i'}}, i' \in \{i, j\}$ denote the empirical latent distribution based on encoded data from domain $i'$, i.e. $\hat{P}_{Z_{i'}} = \pi^Z \circ E_{i'} \# P_{\X_{i'}}$. Then for domain $i$, we optimized the objective,
\begin{align*}
&\min_{E_i, D_i} \mathbb{E}_{x \sim P_{X_i}} L_1(x, D_i\circ E_i(x)) + \lambda L_2(E_i \# P_{X_i} | P_{\hat{Z}_j, N_i}),
\end{align*}
while for domain $j$, we optimized the objective,
\begin{align*}
&\min_{E_j, D_j} \mathbb{E}_{x \sim P_{X_j}} L_1(x, D_j\circ E_j(x)) + \lambda L_2(E_j \# P_{X_j} | P_{\hat{Z}_i, N_j}),
\end{align*}
where $P_{\hat{Z}_j, N_i} := \hat P_{Z_j} \otimes P_{N_i}$ and $P_{\hat{Z}_i, N_j} := \hat P_{Z_i} \otimes P_{N_j}$. The training method was identical to Algorithm \ref{alg:AE} except that we replaced the true latent distribution $P_Z$ with the empirical distributions of the encoded data.

\section{Numerical Results} \label{sec:exp}

\begin{figure*}[h]
\centering
\subfigure[]{\includegraphics[scale=.4]{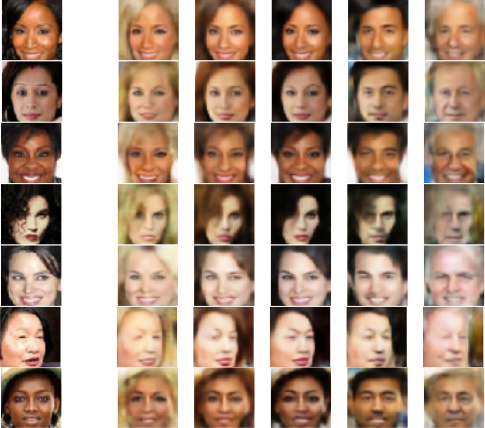}} \hspace{0.5cm}
\subfigure[]{\includegraphics[scale=.4]{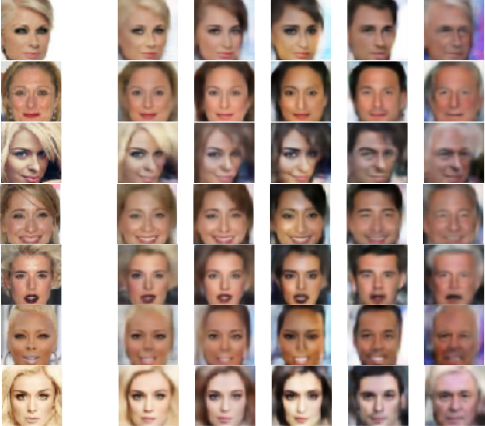}} \hspace{0.5cm}
\subfigure[]{\includegraphics[scale=.4]{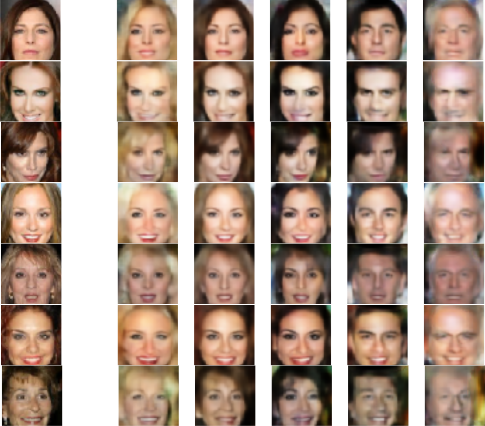}} 
\subfigure[]{\includegraphics[scale=.4]{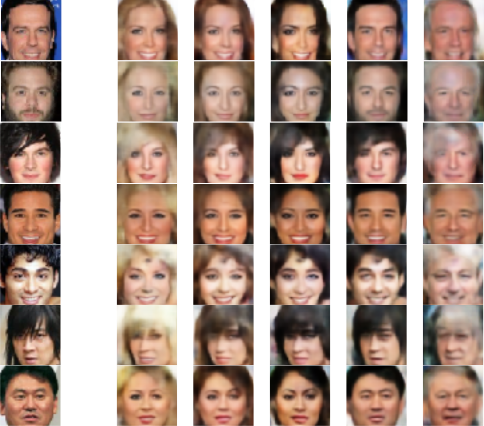}} \hspace{0.5cm}
\subfigure[]{\includegraphics[scale=.4]{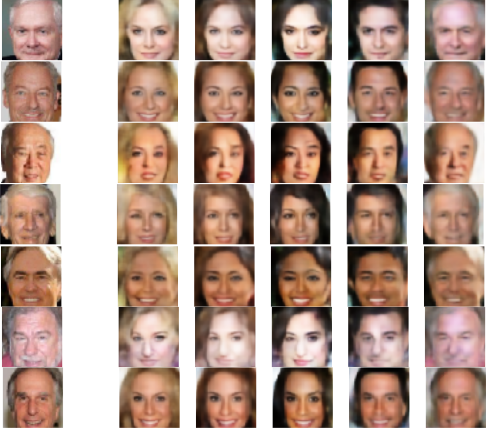}} \hspace{0.5cm}
\subfigure[]{\includegraphics[scale=.35]{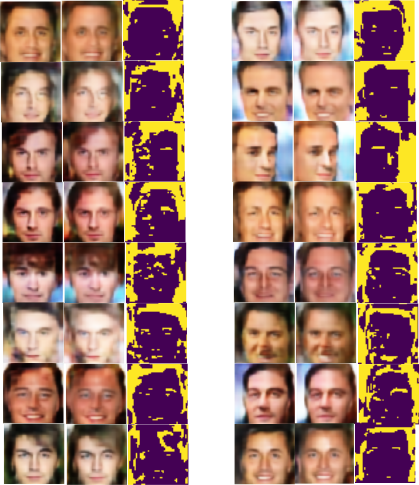}}
\vspace{-0.2cm}
\caption{Examples of transport results from black-hair females (a), blond-hair females (b), brown-hair females (c), black-hair males (d), and gray-hair males (e) to the other domains. The first column of each subfigure denotes the input image that is encoded to the latent space. Then from left to right we have the decodings to the blond-hair female, brown-hair female, black-hair female, black-hair male, and gray-hair male domains. (f) Illustration of different latent samples (rows) composed with two different noise vectors (columns) decoded to all-males domain. The last image for each face is a binary mask showing regions of the image that change the most between the two noise vectors (purple = 0, yellow = 1)}
\label{fig:celebA}
\end{figure*}

\begin{figure*}[h]
\centering
\includegraphics[scale=0.35]{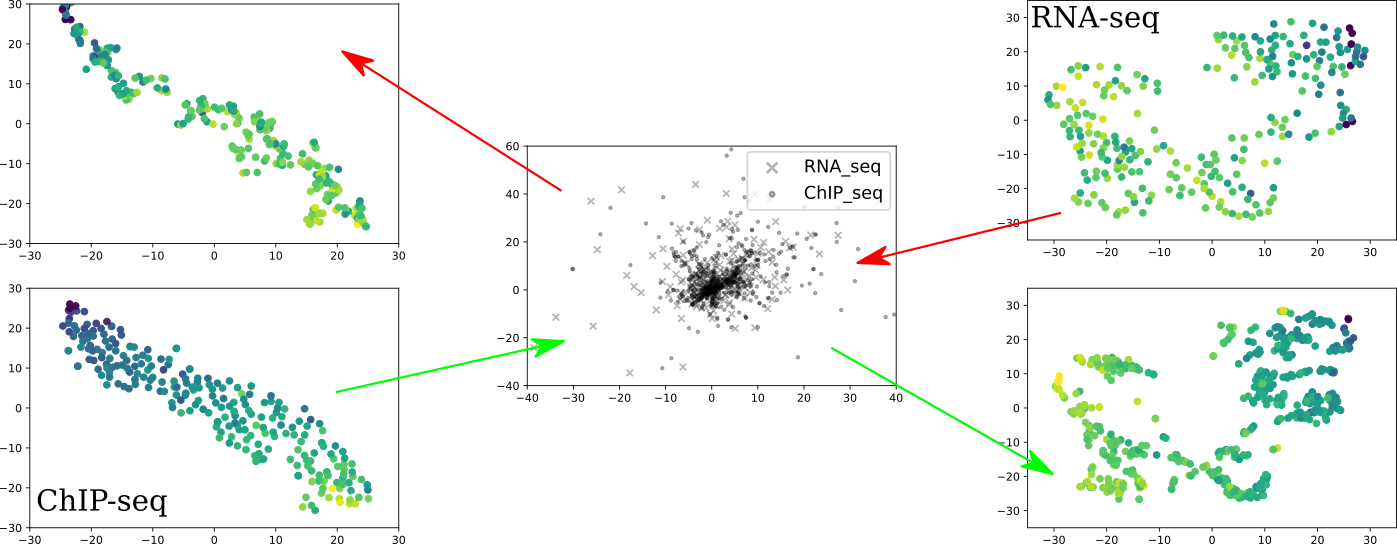}
\vspace{-0.2cm}
\caption{Visualization of transport between single-cell RNA-sequencing data and single-cell ChIP-sequencing data via a latent space representation. (Top-right) RNA-seq data; (Top-left) RNA-seq tranported to ChIP-seq domain; (Bottom-left) ChIP-seq data; (Bottom-right) ChIP-seq data transported to the RNA-seq domain; (Center) shared latent space representation. Some outliers have been cut off by the plot axes.}
\label{fig:genomics}
\end{figure*}

In this section, we show in practice how our framework can be used to perform transport between multiple domains. A key practical advantage of our framework is that once we have learned an invariant latent representation based on a couple of domains, new domains (i.e., new autoencoders) can be added sequentially to the model without retraining on the other domains.
 We demonstrate this experimentally on MNIST and CelebA image datasets as well as a genomics dataset.

\subsection{Handwritten Digits}

We first applied our algorithm towards multi-domain transport on the MNIST and USPS  handwritten digits datasets, using the following experimental procedure.

\begin{enumerate}
\itemsep0em 
\item Estimate the latent representation of handwritten digits by training autoencoders on the MNIST dataset and a synthetic MNIST variant (inverted-MNIST) as described in Section \ref{sec_PZ_unknown}.
\item Using the encoded MNIST dataset as a proxy for the latent distribution, train autoencoders separately on the following datasets as described in Section \ref{sec_PZ_known}:
\begin{enumerate}
\itemsep0em 
\item A synthetic MNIST variant (edge-MNIST)
\item USPS digits
\item Colorized USPS digits
\end{enumerate}
\item Compose autoencoders from Steps 1-2 for multi-domain transport
\end{enumerate}

The composition of encoders and decoders from 5 autoencders results in a total of 20 transport maps (or 10 pairwise maps) between the 5 domains. The results of the transport maps are shown in Figure \ref{fig:digits}. Importantly, we \emph{did not} tune or retrain any of the existing autoencoders when sequentially adding the new domains under Step 2 to the model. This suggests that Algorithm \ref{alg:AE} is successful at encoding the data from different domains to the same latent representation.

With exception of the colorized USPS digits, all of the digits from different domains can reasonably be assumed to be generated \emph{deterministically} from the latent space of handwritten digits. For the colorized USPS digits, one would expect color to be encoded \emph{stochastically} from the noise variable $N_i$ rather than the shared latent variable $Z$. Our autoencoder for the colorized USPS digits is designed to capture this generative process (Figure \ref{fig:digits}f); when decoding random samples from the latent and noise distribution to colorized USPS digits, the shape and color of the digits varies with the latent variable and noise variable respectively.

We also attempted this experiment using a vanilla autoencoder on the MNIST dataset to learn the latent representation of the digits in Step~1. While the transport results were acceptable between the different MNIST variants, the transport between USPS and MNIST fared significantly worse. This indicates that the latent representation learned by alternating between two domains is more invariant to the different handwritten digit domains, and further motivates the importance of learning a quality latent representation in future work.

\subsection{Celebrity Faces}

Subsequently we used our framework to perform multi-domain translation on the CelebA \cite{liu2015faceattributes} celebrity faces dataset. The experiments were performed using a similar procedure as for the handwritten digits:

\begin{enumerate}
\itemsep0em 
\item Learn a latent representation for faces by training paired autoencoders on \emph{black-haired females} and \emph{blond-haired females} as described in Section \ref{sec_PZ_unknown}.
\item Using the encoded images of black-haired and blond-haired females as a proxy for the latent distribution, train autoencoders separately on the following datasets as described in Section \ref{sec_PZ_known}:
\begin{enumerate}
\itemsep0em 
\item \emph{brown-haired females}
\item \emph{black-haired males}
\item \emph{gray-haired males}
\item \emph{all males}
\end{enumerate}
\item Compose autoencoders from Steps (1-2) for transport
\end{enumerate}

The composition of encoders and decoders from 6 autoencoders results in a total of 30 transport maps (or 15 pairwise maps) between the 6 domains. Results of the transport are shown for a subset of these maps in Figure \ref{fig:celebA}a-e. Once again, we \emph{did not} tune or retrain the existing autoencoders when adding the domains from Step 2 to the model. In fact, none of the new autoencoders from Step 2 were exposed to the data from other new domains during training; the only data that was used was the latent encodings from the initial two domains. This suggests that Algorithm \ref{alg:AE} is successful at encoding the data from different domains to the same latent representation.

The autoencoder from Step 2d must learn the faces of males with different hair colors, while the latent representation in Step 1 was learned from domains of female faces with the same hair color. Therefore one would expect hair color to be encoded \emph{stochastically} by the noise variable $N_i$. We found qualitative evidence of the noise variable affecting the darkness of hair when using the autoencoder to decode random samples from the latent distribution with select noise vectors (Figure \ref{fig:celebA}f).

{\bf Progressive Training.} Training the autoencoders on the CelebA faces was significantly more challenging than handwritten digits. To guide and stabilize training, we employed a strategy of \emph{progressive training} inspired by \cite{karras2017progressive} for GANs. We first trained the autoencoders in Step 1 on lower resolution (8 x 8) images and gradually progressed to higher resolution (64 x 64) images, adding new convolutional layers to both the encoder and the decoder components. We then performed the same procedure in Step 2, using the final encodings from Step 1. The discriminator was not affected as it continued to train in the latent space.

\subsection{Genomics}

Datasets generated by different single-cell experimental methods give only a partial view of the cell state. Processes such as development and disease progression are driven by the interplay between many components of a cell (e.g. expression of key genes, chromatin conformation, epigenetic modifications, etc.) but each type of experiment only produces data from one domain. Currently, methods are lacking for integrating and translating between different biological domains.

A significant difference between image-based and genomics-based translation is the role that neural architecture plays in enforcing correspondence between translated elements. In image-to-image translation, the structure of natural images is preserved in the convolutional structure of the neural network, which is biased by design towards the desired coupling between images. On the other hand, natural structure in genomics data, such as single-cell gene expression, is based on the (unknown) 3D associations between different loci on the DNA and is not naturally preserved in the neural network structure. As a result, some high-level form of supervision (e.g.~in the form of generic class labels) may be necessary to preserve the correct structure during transport.

We used our autoencoder framework to perform translation between unpaired single-cell RNA-sequencing data (RNA-seq, \citet{kolodziejczyk2015single}) and single-cell ChIP-sequencing data processed into chromatin signatures (ChIP-seq, \citet{rotem2015single}) from mouse embryonic stem cells. When performing this experiment without using class labels, the distributions of the RNA-seq and ChIP-seq data appeared to be Gaussian in the latent space, which allowed for arbitrary alignment of the transported distributions with the real domain distributions. We found that providing binary class labels based on the pluripotency marker Oct4 (Pou5f1) to the discriminator in Algorithm \ref{alg:AE} 
was sufficient to orient the encoded distributions of the RNA-seq and ChIP-seq data in the latent space. This resulted in translated distributions that were well-aligned with the true distributions based on the full range of pluripotency (Figure \ref{fig:genomics}). Our framework can be used to make predictions with regards to how different biological domains are related, which can subsequently be validated by experimentation.

\section{Discussion}

We proposed a novel framework for learning transport maps between multiple domains, based on leveraging a shared latent representation to decompose the multi-domain translation model between $k$ domains into $k$ \emph{uncoupled} autoencoders. These autoencoders are trained separately and composed to translate between different pairs of domains. Compared to existing frameworks, our model offers several advantages such as being flexible (i.e., autoencoders can be trained separately on different domains) and efficient (i.e., only one discriminator is required in the latent space; distribution matching is done in the latent space rather than the original space). Future work will focus on better methods for learning invariant representations in order to improve data integration as well as generative quality.

\pagebreak
\section*{Acknowledgements}
 Discussions with G.V. Shivashankar on establishing the link between genome architecture and gene expression motivated this work on the development of methods for integrating and translating different data modalities. Karren Yang was partially supported by an NSF Graduate Fellowship. Caroline Uhler was partially supported by NSF (DMS-1651995), ONR (N00014-17-1-2147 and N00014-18-1-2765), IBM, and a Sloan Fellowship. The Titan Xp used for this research was donated by the NVIDIA Corporation.

\bibliographystyle{icml2019}
\bibliography{icml_2018}

\begin{thebibliography}{6}
\providecommand{\natexlab}[1]{#1}
\providecommand{\url}[1]{\texttt{#1}}
\expandafter\ifx\csname urlstyle\endcsname\relax
  \providecommand{\doi}[1]{doi: #1}\else
  \providecommand{\doi}{doi: \begingroup \urlstyle{rm}\Url}\fi

\bibitem[Eden et~al.(2009)Eden, Navon, Steinfeld, Lipson, and
  Yakhini]{eden2009gorilla}
Eden, Eran, Navon, Roy, Steinfeld, Israel, Lipson, Doron, and Yakhini, Zohar.
\newblock {GO}rilla: a tool for discovery and visualization of enriched {GO}
  terms in ranked gene lists.
\newblock \emph{BMC Bioinformatics}, 10\penalty0 (1):\penalty0 48, 2009.

\bibitem[Eijpe et~al.(2000)Eijpe, Heyting, Gross, and
  Jessberger]{eijpe2000association}
Eijpe, M, Heyting, C, Gross, B, and Jessberger, R.
\newblock Association of mammalian smc1 and smc3 proteins with meiotic
  chromosomes and synaptonemal complexes.
\newblock \emph{J Cell Sci}, 113\penalty0 (4):\penalty0 673--682, 2000.

\bibitem[Kashyap et~al.(2009)Kashyap, Rezende, Scotland, Shaffer, Persson,
  Gudas, and Mongan]{kashyap2009regulation}
Kashyap, Vasundhra, Rezende, Naira~C, Scotland, Kymora~B, Shaffer, Sebastian~M,
  Persson, Jenny~Liao, Gudas, Lorraine~J, and Mongan, Nigel~P.
\newblock Regulation of stem cell pluripotency and differentiation involves a
  mutual regulatory circuit of the nanog, oct4, and sox2 pluripotency
  transcription factors with polycomb repressive complexes and stem cell
  micrornas.
\newblock \emph{Stem cells and development}, 18\penalty0 (7):\penalty0
  1093--1108, 2009.

\bibitem[Patrini et~al.(2018)Patrini, Carioni, Forre, Bhargav, Welling, Berg,
  Genewein, and Nielsen]{patrini2018sinkhorn}
Patrini, Giorgio, Carioni, Marcello, Forre, Patrick, Bhargav, Samarth, Welling,
  Max, Berg, Rianne van~den, Genewein, Tim, and Nielsen, Frank.
\newblock Sinkhorn autoencoders.
\newblock \emph{arXiv preprint arXiv:1810.01118}, 2018.

\bibitem[Rotem et~al.(2015)Rotem, Ram, Shoresh, Sperling, Goren, Weitz, and
  Bernstein]{rotem2015single}
Rotem, Assaf, Ram, Oren, Shoresh, Noam, Sperling, Ralph~A, Goren, Alon, Weitz,
  David~A, and Bernstein, Bradley~E.
\newblock Single-cell chip-seq reveals cell subpopulations defined by chromatin
  state.
\newblock \emph{Nature biotechnology}, 33\penalty0 (11):\penalty0 1165, 2015.

\bibitem[Zuin et~al.(2014)Zuin, Franke, van IJcken, Van Der~Sloot, Krantz,
  van~der Reijden, Nakato, Lenhard, and Wendt]{zuin2014cohesin}
Zuin, Jessica, Franke, Vedran, van IJcken, Wilfred~FJ, Van Der~Sloot, Antoine,
  Krantz, Ian~D, van~der Reijden, Michael~IJA, Nakato, Ryuichiro, Lenhard,
  Boris, and Wendt, Kerstin~S.
\newblock A cohesin-independent role for nipbl at promoters provides insights
  in cdls.
\newblock \emph{PLoS genetics}, 10\penalty0 (2):\penalty0 e1004153, 2014.

\end{thebibliography}


\begin{thebibliography}{21}
\providecommand{\natexlab}[1]{#1}
\providecommand{\url}[1]{\texttt{#1}}
\expandafter\ifx\csname urlstyle\endcsname\relax
  \providecommand{\doi}[1]{doi: #1}\else
  \providecommand{\doi}{doi: \begingroup \urlstyle{rm}\Url}\fi

\bibitem[Almahairi et~al.(2018)Almahairi, Rajeswar, Sordoni, Bachman, and
  Courville]{almahairi2018augmented}
Almahairi, A., Rajeswar, S., Sordoni, A., Bachman, P., and Courville, A.
\newblock Augmented {CycleGAN}: Learning many-to-many mappings from unpaired
  data.
\newblock \emph{arXiv:1802.10151}, 2018.

\bibitem[Choi et~al.(2018)Choi, Choi, Kim, Ha, Kim, and Choo]{choi2018stargan}
Choi, Y., Choi, M., Kim, M., Ha, J.-W., Kim, S., and Choo, J.
\newblock Stargan: Unified generative adversarial networks for multi-domain
  image-to-image translation.
\newblock In \emph{Proceedings of the IEEE Conference on Computer Vision and
  Pattern Recognition}, pp.\  8789--8797, 2018.

\bibitem[Eden et~al.(2009)Eden, Navon, Steinfeld, Lipson, and
  Yakhini]{eden2009gorilla}
Eden, E., Navon, R., Steinfeld, I., Lipson, D., and Yakhini, Z.
\newblock {GO}rilla: a tool for discovery and visualization of enriched {GO}
  terms in ranked gene lists.
\newblock \emph{BMC Bioinformatics}, 10\penalty0 (1):\penalty0 48, 2009.

\bibitem[Eijpe et~al.(2000)Eijpe, Heyting, Gross, and
  Jessberger]{eijpe2000association}
Eijpe, M., Heyting, C., Gross, B., and Jessberger, R.
\newblock Association of mammalian {SMC}1 and {SMC}3 proteins with meiotic
  chromosomes and synaptonemal complexes.
\newblock \emph{Journal of Cell Science}, 113\penalty0 (4):\penalty0 673--682,
  2000.

\bibitem[Goodfellow et~al.(2014)Goodfellow, Pouget-Abadie, Mirza, Xu,
  Warde-Farley, Ozair, Courville, and Bengio]{goodfellow2014generative}
Goodfellow, I., Pouget-Abadie, J., Mirza, M., Xu, B., Warde-Farley, D., Ozair,
  S., Courville, A., and Bengio, Y.
\newblock Generative adversarial nets.
\newblock In \emph{Advances in Neural Information Processing Systems}, pp.\
  2672--2680, 2014.

\bibitem[Karras et~al.(2017)Karras, Aila, Laine, and
  Lehtinen]{karras2017progressive}
Karras, T., Aila, T., Laine, S., and Lehtinen, J.
\newblock Progressive growing of {gan}s for improved quality, stability, and
  variation.
\newblock \emph{arXiv:1710.10196}, 2017.

\bibitem[Kashyap et~al.(2009)Kashyap, Rezende, Scotland, Shaffer, Persson,
  Gudas, and Mongan]{kashyap2009regulation}
Kashyap, V., Rezende, N.~C., Scotland, K.~B., Shaffer, S.~M., Persson, J.~L.,
  Gudas, L.~J., and Mongan, N.~P.
\newblock Regulation of stem cell pluripotency and differentiation involves a
  mutual regulatory circuit of the nanog, oct4, and sox2 pluripotency
  transcription factors with polycomb repressive complexes and stem cell
  micrornas.
\newblock \emph{Stem Cells and Development}, 18\penalty0 (7):\penalty0
  1093--1108, 2009.

\bibitem[Kim et~al.(2017)Kim, Cha, Kim, Lee, and Kim]{kim2017learning}
Kim, T., Cha, M., Kim, H., Lee, J.~K., and Kim, J.
\newblock Learning to discover cross-domain relations with generative
  adversarial networks.
\newblock \emph{arXiv:1703.05192}, 2017.

\bibitem[Kingma \& Welling(2013)Kingma and Welling]{kingma2013auto}
Kingma, D.~P. and Welling, M.
\newblock Auto-encoding variational {B}ayes.
\newblock \emph{arXiv:1312.6114}, 2013.

\bibitem[Kolodziejczyk et~al.(2015)Kolodziejczyk, Kim, Tsang, Ilicic,
  Henriksson, Natarajan, Tuck, Gao, B{\"u}hler, Liu,
  et~al.]{kolodziejczyk2015single}
Kolodziejczyk, A.~A., Kim, J.~K., Tsang, J.~C., Ilicic, T., Henriksson, J.,
  Natarajan, K.~N., Tuck, A.~C., Gao, X., B{\"u}hler, M., Liu, P., et~al.
\newblock Single cell {RNA}-sequencing of pluripotent states unlocks modular
  transcriptional variation.
\newblock \emph{Cell Stem Cell}, 17\penalty0 (4):\penalty0 471--485, 2015.

\bibitem[Liu \& Tuzel(2016)Liu and Tuzel]{liu2016coupled}
Liu, M.-Y. and Tuzel, O.
\newblock Coupled generative adversarial networks.
\newblock In \emph{Advances in Neural Information Processing Systems}, pp.\
  469--477, 2016.

\bibitem[Liu et~al.(2017)Liu, Breuel, and Kautz]{liu2017unsupervised}
Liu, M.-Y., Breuel, T., and Kautz, J.
\newblock Unsupervised image-to-image translation networks.
\newblock In \emph{Advances in Neural Information Processing Systems}, pp.\
  700--708, 2017.

\bibitem[Liu et~al.(2015)Liu, Luo, Wang, and Tang]{liu2015faceattributes}
Liu, Z., Luo, P., Wang, X., and Tang, X.
\newblock Deep learning face attributes in the wild.
\newblock In \emph{Proceedings of the International Conference on Computer
  Vision (ICCV)}, December 2015.

\bibitem[Makhzani et~al.(2015)Makhzani, Shlens, Jaitly, Goodfellow, and
  Frey]{makhzani2015adversarial}
Makhzani, A., Shlens, J., Jaitly, N., Goodfellow, I., and Frey, B.
\newblock Adversarial autoencoders.
\newblock \emph{arXiv:1511.05644}, 2015.

\bibitem[McDermott et~al.(2018)McDermott, Yan, Naumann, Hunt, Suresh,
  Szolovits, and Ghassemi]{mcdermott2018semi}
McDermott, M.~B., Yan, T., Naumann, T., Hunt, N., Suresh, H., Szolovits, P.,
  and Ghassemi, M.
\newblock Semi-supervised biomedical translation with cycle {W}asserstein
  regression {GAN}s.
\newblock 2018.

\bibitem[Patrini et~al.(2018)Patrini, Carioni, Forre, Bhargav, Welling, Berg,
  Genewein, and Nielsen]{patrini2018sinkhorn}
Patrini, G., Carioni, M., Forre, P., Bhargav, S., Welling, M., Berg, R. v.~d.,
  Genewein, T., and Nielsen, F.
\newblock Sinkhorn autoencoders.
\newblock \emph{arXiv:1810.01118}, 2018.

\bibitem[Rotem et~al.(2015)Rotem, Ram, Shoresh, Sperling, Goren, Weitz, and
  Bernstein]{rotem2015single}
Rotem, A., Ram, O., Shoresh, N., Sperling, R.~A., Goren, A., Weitz, D.~A., and
  Bernstein, B.~E.
\newblock Single-cell {ChIP}-seq reveals cell subpopulations defined by
  chromatin state.
\newblock \emph{Nature Biotechnology}, 33\penalty0 (11):\penalty0 1165, 2015.

\bibitem[Tolstikhin et~al.(2017)Tolstikhin, Bousquet, Gelly, and
  Schoelkopf]{tolstikhin2017wasserstein}
Tolstikhin, I., Bousquet, O., Gelly, S., and Schoelkopf, B.
\newblock Wasserstein auto-encoders.
\newblock \emph{arXiv:1711.01558}, 2017.

\bibitem[Yi et~al.(2017)Yi, Zhang, Tan, and Gong]{yi2017dualgan}
Yi, Z., Zhang, H.~R., Tan, P., and Gong, M.
\newblock Dualgan: Unsupervised dual learning for image-to-image translation.
\newblock In \emph{Proceedings of the International Conference on Computer
  Vision (ICCV)}, pp.\  2868--2876, 2017.

\bibitem[Zhu et~al.(2017)Zhu, Park, Isola, and Efros]{zhu2017unpaired}
Zhu, J.-Y., Park, T., Isola, P., and Efros, A.~A.
\newblock Unpaired image-to-image translation using cycle-consistent
  adversarial networks.
\newblock \emph{arXiv:1703.10593}, 2017.

\bibitem[Zuin et~al.(2014)Zuin, Franke, van IJcken, Van Der~Sloot, Krantz,
  van~der Reijden, Nakato, Lenhard, and Wendt]{zuin2014cohesin}
Zuin, J., Franke, V., van IJcken, W.~F., Van Der~Sloot, A., Krantz, I.~D.,
  van~der Reijden, M.~I., Nakato, R., Lenhard, B., and Wendt, K.~S.
\newblock A cohesin-independent role for nipbl at promoters provides insights
  in cdls.
\newblock \emph{PLoS Genetics}, 10\penalty0 (2):\penalty0 e1004153, 2014.

\end{thebibliography}

\thispagestyle{empty}
\onecolumn
\icmltitle{Multi-Domain Translation
	by Learning Uncoupled Autoencoders (Supplementary Material)}
\appendix
\allowdisplaybreaks[1]

\section{Proofs from Main Text}

For convenience we repeat the main assumptions, definitions, and results here:
\begin{assumption} \label{app-ass:sem} The random variables $X_1, \dots, X_k$ are generated by the following causal generative mechanism:
\begin{equation*}
X_i = f_i(Z, N_i), \quad \forall i \in [k],
\end{equation*}
where $f_i$, $i\in[k]$ are injective functions, $Z$ is a latent variable with distribution $P_Z$, and $N_i$, $i\in[k]$, are independent noise variables with distribution $P_{N_i}$. 
\end{assumption}

\begin{definition}[Path consistency]
Let $(i_1, \cdots, i_{\ell})$ denote a sequence of domains. A tuple of autoencoders $(E_i, D_i)_{i \in [k]}$ is \emph{path-consistent} if for every finite sequence $(i_1, \cdots, i_{\ell})$,
\begin{align*}
\int_{\X_{i_{\ell-1}}} ... \int_{\X_{i_2}}
&\prod_{j'=1}^{\ell-1} q_{X_{i_{j'+1}}|X_{i_{j'}}}(x_{j'+1}|x_{j'})
 dx_2...dx_{\ell-1} \\
&= q_{X_{i_\ell}|X_{i_1}}(x_{\ell}|x_1)
\end{align*}
\end{definition}
Path-consistency implies that any sequence of encodings and decodings starting in a domain $i_1$ and ending in a domain $i_{\ell}$ induces the same conditional distribution.

\begin{definition}[Global consistency]
Let $Q^{(i)}$ be the joint distribution over $X_1, \cdots, X_k$ with density given by
\begin{align*}
q^{(i)}(\bm{x}) := \int_{\Z} \prod_{j \neq i} q_{X_j|Z}(x_j|z)q_{Z|X_i}(z|x_i) p_{X_i}(x_i) dz.
\end{align*}
A tuple of autoencoders $(E_i, D_i)_{i \in [k]}$ satisfies \emph{global consistency} if $Q^{(1)} = Q^{(2)} = \cdots = Q^{(k)}$. 
\end{definition}

Global consistency means that the joint probability distribution generated by encoding a domain $X_i$ using $E_i$ and decoding the resulting latent variable to all other domains $j\in[k]$, $j\neq i$ using $D_j$ is equivalent for any source domain $i\in [k]$.

\begin{proposition} \label{app-the:sym}
Under Assumption \ref{app-ass:sem} every optimal solution of encoder-decoder tuples $(E_i,D_i)_{i\in[k]}$ to the optimization problem 
\begin{equation}
	\label{app-eq_supp}
\min_{(E_i,D_i)_{i\in [k]}}\mathbb{E}_{x \sim P_{X_i}} \left[ L_1(x, D_i\circ E_i(x)) + \lambda L_2(E_i \# P_{X_i} | P_{Z, N_i}) \right]
\end{equation} 
 satisfies path and global consistency.
\end{proposition}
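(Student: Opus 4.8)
The plan is to first pin down the structure of any optimizer of \eqref{app-eq_supp} and then read off both consistency properties from it. Since $L_1\ge 0$ and $L_2$ is a divergence (minimized exactly when its two arguments coincide), each domain's objective is bounded below by the sum of these two separate minima, and this lower bound is attained simultaneously by the pair $E_i=f_i^{-1}$, $D_i=f_i$, which is well defined because $f_i$ is injective (Assumption~\ref{app-ass:sem}) and satisfies $P_{X_i}=f_i\#(P_Z\otimes P_{N_i})$. Hence the minimum value equals this lower bound, and \emph{every} optimizer must attain both minima separately: the reconstruction term forces $D_i\circ E_i=\mathrm{id}$ $P_{X_i}$-a.s., and the latent term forces $E_i\#P_{X_i}=P_Z\otimes P_{N_i}=:T_i$, for every $i\in[k]$.

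The key structural step --- and the part I expect to be the main obstacle --- is to upgrade the one-sided identity $D_i\circ E_i=\mathrm{id}$ to a two-sided one. Writing $A$ for a full-measure set on which $D_i\circ E_i=\mathrm{id}$, the left inverse $D_i$ makes $E_i|_A$ injective; setting $S:=E_i(A)$ one checks $T_i(S)=E_i\#P_{X_i}(S)=1$ and $E_i\circ D_i=\mathrm{id}$ on $S$, so $E_i\circ D_i=\mathrm{id}$ holds $T_i$-a.s. (measurability of $S$ for injective Borel maps is the only technical point). Equivalently, I would package this as the joint identity $\Gamma_{E_i}\#P_{X_i}=\Gamma_{D_i}\#T_i$ on $\X_i\times(\Z\times\N_i)$, where $\Gamma_{E_i}(x)=(x,E_i(x))$ and $\Gamma_{D_i}(z,n)=(D_i(z,n),(z,n))$: both measures are supported on the graph $\{(x,(z,n)):x=D_i(z,n)\}$ (by reconstruction for the first, trivially for the second) and share the second marginal $T_i$ (by latent matching for the first, trivially for the second), so they coincide.

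For path consistency, note that the deterministic encoder gives $q_{Z|X_i}(z\mid x)=\delta_{z_i(x)}(z)$ with $z_i(x):=\pi^Z(E_i(x))$, whence $q_{X_j|X_i}(x'\mid x)=q_{X_j|Z}(x'\mid z_i(x))$. I would then integrate out the intermediate variables in the path product one at a time: for $x_2\sim q_{X_{i_2}|Z}(\cdot\mid z_{i_1}(x_1))$ we have $x_2=D_{i_2}(z_{i_1}(x_1),N_{i_2})$, and the two-sided identity gives $z_{i_2}(x_2)=\pi^Z E_{i_2}(D_{i_2}(z_{i_1}(x_1),N_{i_2}))=z_{i_1}(x_1)$ a.s.; hence $\int q_{X_{i_3}|Z}(x_3\mid z_{i_2}(x_2))\,q_{X_{i_2}|Z}(x_2\mid z_{i_1}(x_1))\,dx_2=q_{X_{i_3}|Z}(x_3\mid z_{i_1}(x_1))$. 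Iterating this collapse over $x_2,\dots,x_{\ell-1}$ reduces the whole chain to $q_{X_{i_\ell}|Z}(x_\ell\mid z_{i_1}(x_1))=q_{X_{i_\ell}|X_{i_1}}(x_\ell\mid x_1)$, which is path consistency.

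Finally, for global consistency I would marginalize the joint identity of the second paragraph over the noise coordinate to obtain the equality of $(Z,X_i)$-densities
\[
q_{Z|X_i}(z\mid x_i)\,p_{X_i}(x_i)=q_{X_i|Z}(x_i\mid z)\,p_Z(z).
\]
Substituting this into the definition of $q^{(i)}$ turns the encoder factor into a decoder factor,
\[
q^{(i)}(\bm{x})=\int_{\Z}\Big(\prod_{j\neq i}q_{X_j|Z}(x_j\mid z)\Big)q_{X_i|Z}(x_i\mid z)\,p_Z(z)\,dz=\int_{\Z}\prod_{j=1}^k q_{X_j|Z}(x_j\mid z)\,p_Z(z)\,dz,
\]
which is manifestly independent of $i$; therefore $Q^{(1)}=\dots=Q^{(k)}$. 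The only delicate ingredient throughout is the two-sided inverse established in the second paragraph, from which both the path collapse and the encoder-to-decoder swap follow immediately.
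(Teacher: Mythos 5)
Your proof is correct and follows essentially the same route as the paper's: show the optimum is zero via the true generative maps $D_i=f_i$, upgrade the a.s.\ left inverse $D_i\circ E_i=\mathrm{id}$ to a two-sided inverse on full-measure sets, collapse the intermediate integrals for path consistency, and swap the encoder conditional for the decoder conditional to obtain the manifestly $i$-independent product form for global consistency. The only cosmetic difference is that you package the two-sided inverse as the graph-measure identity $\Gamma_{E_i}\#P_{X_i}=\Gamma_{D_i}\#(P_Z\otimes P_{N_i})$, whereas the paper states it as a bijection between full-measure sets $A_i\subseteq\X_i$ and $B_i\subseteq\Z\times\N_i$; the content is the same.
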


To prove this we first introduce some helpful lemmas.

\begin{lemma} The optimal value of the optimization problem in (\ref{app-eq_supp}) is zero. 
\end{lemma}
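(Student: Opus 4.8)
The plan is to show the objective in (\ref{app-eq_supp}) is bounded below by zero and then to exhibit, under Assumption~\ref{app-ass:sem}, an explicit encoder-decoder pair that attains this bound. First I would observe that the objective is a sum of two nonnegative terms: the reconstruction term $\mathbb{E}_{x \sim P_{X_i}}[L_1(x, D_i \circ E_i(x))]$ is nonnegative because $L_1$ is a metric, and the regularization term $\lambda L_2(E_i \# P_{X_i} \,|\, P_{Z,N_i})$ is nonnegative because $\lambda > 0$ and $L_2$ is a divergence. Hence the optimal value is at least $0$, and it suffices to construct a feasible pair $(E_i, D_i)$ making both terms vanish.

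The natural candidate is to let the decoder reproduce the true generative mechanism, $D_i := f_i$, and let the encoder invert it, $E_i := f_i^{-1}$. This is well-defined because Assumption~\ref{app-ass:sem} requires each $f_i$ to be injective, so $f_i$ admits an inverse $f_i^{-1}$ on its image $f_i(\Z \times \N_i)$, which contains the support of $P_{X_i}$. With this choice, $D_i \circ E_i(x) = f_i(f_i^{-1}(x)) = x$ for every $x$ in the support of $P_{X_i}$, so $L_1(x, D_i \circ E_i(x)) = 0$ pointwise and the reconstruction term is zero.

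For the regularization term I would use that under Assumption~\ref{app-ass:sem} the marginal $P_{X_i}$ is exactly the pushforward $f_i \# (P_Z \otimes P_{N_i})$, since $X_i = f_i(Z, N_i)$ with $(Z, N_i) \sim P_Z \otimes P_{N_i}$ by independence of $Z$ and $N_i$. Pushing forward by $E_i = f_i^{-1}$ then gives $E_i \# P_{X_i} = (f_i^{-1} \circ f_i) \# (P_Z \otimes P_{N_i}) = P_Z \otimes P_{N_i} = P_{Z,N_i}$, using $f_i^{-1} \circ f_i = \mathrm{id}$ on $\Z \times \N_i$. Since any divergence between a distribution and itself vanishes, $L_2(E_i \# P_{X_i} \,|\, P_{Z,N_i}) = 0$. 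With both terms equal to zero, the constructed pair achieves objective value $0$, matching the lower bound, so the optimum is exactly zero.

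The main obstacle is technical rather than conceptual: one must ensure $f_i^{-1}$ is a legitimate (measurable) map, so that $E_i$ is an admissible encoder and the pushforward computation is valid. Injectivity alone gives only a set-theoretic inverse on the image; to guarantee Borel measurability of $f_i^{-1}$ one typically invokes a result such as the Lusin--Souslin theorem on injective Borel maps between standard Borel spaces, under which $f_i(\Z \times \N_i)$ is Borel and $f_i^{-1}$ is Borel measurable. I would flag this point but not dwell on it, as it is standard for the Euclidean domains considered here.
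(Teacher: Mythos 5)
Your proposal is correct and follows essentially the same route as the paper's proof: set $D_i = f_i$, invert it on its image (which has full $P_{X_i}$-measure) to define $E_i$, and check that both the reconstruction term and the latent divergence vanish, with nonnegativity of $L_1$ and $L_2$ giving the matching lower bound. Your explicit remark on the measurability of $f_i^{-1}$ via Lusin--Souslin is a welcome refinement that the paper's proof glosses over, but it does not change the argument.
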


\begin{proof}
We choose $(E_i, D_i)$ as follows: Take $D_i = f_i$ and note that $P_{X_i} = D_i \# (P_{Z} \otimes P_{N_i})$. Now we construct $E_i$ such that $E_i \# P_{X_i} = P_{Z} \otimes P_{N_i}$ and $D_i \circ E_i(x) = x$ almost surely with respect to $P_{\X_i}$. To do this, first note that there exists a subset $A \subseteq \X_i$ such that the restriction $\tilde{D}_i: \Z \times \N_i \rightarrow A$ is surjective and $P_{X_i}(A)=1$. Since $\tilde{D}_i$ is injective by hypothesis, it follows that $\tilde{D}_i: \Z \times \N_i \rightarrow A$ is a bijective map with an inverse $\tilde{D}_i^{-1}: A \rightarrow \Z \times \N_i$. Let $E_i: \X \rightarrow \Z \times \N_i$ be any function such that $E_i(x) = \tilde{D}_i^{-1}(x), \forall x \in A$. It follows that $D_i(E_i(x))= D_i(D^{-1}_i(x))=x, \forall x \in A$ and that for all Borel sets $B \subseteq \Z \times \N_i$,
$$E_i \# P_{X_i}(B) = D_i^{-1} \# P_{X_i}(B) = P_{X_i} (D_i(B)) = D_i \# (P_{Z} \otimes P_{N_i}) (D_i(B)) = P_{Z} \otimes P_{N_i} (B).$$
Observe that this choice of $(E_i, D_i)$ achieves zero loss for the objective in (\ref{app-eq_supp}), which is the smallest possible value given the non-negativity of $L_1$ and $L_2$.
\end{proof}

\begin{lemma} 
	\label{app-lemma_1}
	For any tuple of optimal solutions $(E_i, D_i)_{i \in [k]}$ to (\ref{app-eq_supp}), there exists $A_i \subseteq \X_i$ and $B_i \subseteq \Z \times \N_i$, $i\in[k]$ such that 
	\begin{enumerate}
	\item[(i)] $P_{X_i}(A) = 1$, 
	\item[(ii)] $P_Z \otimes P_{N_i}(B_i) = 1$, 
	\item[(iii)] the restrictions $E_i: A_i \rightarrow B_i$ and $D_i: B_i \rightarrow A_i$ are bijective with $E_i = D_i^{-1}$, 
	\item[(iv)] $\pi^{Z} B_1 = \cdots = \pi^{Z} B_k$, where $\pi^Z: (z, n) \mapsto z$ is the projection to $Z$.
	\end{enumerate}
\end{lemma}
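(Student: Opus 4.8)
The plan is to extract the pointwise consequences of optimality, build provisional sets $A_i^0, B_i^0$ by hand, and then force condition (iv) by intersecting the latent projections and restricting everything accordingly. By the preceding lemma the optimal value of (\ref{app-eq_supp}) is zero, and since $L_1$ and $L_2$ are both non-negative, at any optimal tuple $(E_i,D_i)_{i\in[k]}$ each of the two loss terms must vanish for every $i\in[k]$. Because $L_1$ is the Euclidean metric, the reconstruction term vanishing forces $D_i\circ E_i(x)=x$ for $P_{X_i}$-almost every $x$; because $L_2$ is a divergence (zero exactly when its two arguments coincide), the latent term vanishing gives $E_i\#P_{X_i}=P_Z\otimes P_{N_i}$. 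These two facts are the only inputs needed.

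Next I would set $A_i^0:=\{x\in\X_i : D_i(E_i(x))=x\}$, which is Borel and has $P_{X_i}(A_i^0)=1$, together with $B_i^0:=E_i(A_i^0)$. On $A_i^0$ the map $E_i$ is injective since it admits $D_i$ as a left inverse there, so $E_i:A_i^0\to B_i^0$ is a bijection; one checks $D_i\circ E_i=\mathrm{id}$ on $A_i^0$ and $E_i\circ D_i=\mathrm{id}$ on $B_i^0$, so that $D_i|_{B_i^0}$ is its two-sided inverse, establishing a preliminary version of (iii). For the measure of $B_i^0$, the inclusion $A_i^0\subseteq E_i^{-1}(B_i^0)$ together with $E_i\#P_{X_i}=P_Z\otimes P_{N_i}$ yields $P_Z\otimes P_{N_i}(B_i^0)\ge P_{X_i}(A_i^0)=1$, giving (ii) for these provisional sets.

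The genuinely non-automatic part is (iv). Each $\pi^Z B_i^0$ has full $P_Z$-measure, because $\pi^Z\#(P_Z\otimes P_{N_i})=P_Z$ and $B_i^0$ has full measure, but distinct full-measure sets need not coincide. I would therefore put $S:=\bigcap_{i\in[k]}\pi^Z B_i^0$, which still satisfies $P_Z(S)=1$, and then restrict by defining $B_i:=B_i^0\cap(\pi^Z)^{-1}(S)$ and $A_i:=D_i(B_i)$. One then verifies: $\pi^Z B_i=S$ for every $i$ (the surjectivity onto $S$ uses that each $s\in S$ already lies in $\pi^Z B_i^0$), which is exactly (iv); that $P_Z\otimes P_{N_i}(B_i)=1$ since $(\pi^Z)^{-1}(S)$ has full measure; that $A_i=A_i^0\cap E_i^{-1}(B_i)$, so $P_{X_i}(A_i)=1$ because $P_{X_i}(E_i^{-1}(B_i))=E_i\#P_{X_i}(B_i)=1$; and finally that restricting the mutually inverse bijections $E_i,D_i$ to the matched subsets $A_i,B_i$ preserves (iii).

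I expect the main obstacle to be twofold. Conceptually, condition (iv) is the only part that couples the domains, so a purely pointwise a.s.\ argument is insufficient — one must trim each $B_i^0$ down to the common projection $S$ and then propagate this trimming back through $D_i$ while confirming that no mass and no bijectivity is lost. Technically, the sets $B_i^0=E_i(A_i^0)$ and $\pi^Z B_i^0$ are images of Borel sets under Borel maps, hence a priori only analytic rather than Borel; I would handle this by invoking that analytic subsets of the standard Borel spaces $\X_i$ and $\Z\times\N_i$ are universally measurable, so that all the pushforward-measure identities above are valid with respect to the completed measures (alternatively, one may replace each image by a Borel set that agrees with it up to a null set).
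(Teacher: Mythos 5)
Your proposal is correct and follows essentially the same route as the paper's proof: extract $D_i\circ E_i=\mathrm{id}$ on a full-measure set $A_i$ from the vanishing reconstruction loss, use $E_i\#P_{X_i}=P_Z\otimes P_{N_i}$ from the vanishing divergence to get $B_i$, and obtain a common latent projection for part (iv). In fact you are more careful than the paper at its one terse step — the paper simply asserts a single full-measure $B\subseteq\Z$ onto which every $\pi^Z\circ E_i$ is surjective, whereas you construct it explicitly as the intersection $S=\bigcap_i\pi^Z B_i^0$ and propagate the trimming back through $D_i$, and you also flag the (real, but routinely handled) measurability of the image sets via universal measurability of analytic sets.
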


\begin{proof}
We construct $A_i$ and $B_i$ as follows. From the previous lemma, it follows that for all $i \in [k]$, any optimal solution $(E_i, D_i)$ to (\ref{app-eq_supp}) satisfies
$$\mathbb{E}_{x \sim P_{X_i}} L_1(D_i(E_i(x)),x) = 0.$$

Since $L_1$ is non-negative and equal to zero if and only if $D_i(E_i(x)) = x$, this implies that there exists $A_i \subseteq \X_i$ such that $P_{X_i}(A_i)=1$ and $D_i(E_i(x))=x$ for all $x \in A_i$. Consider the restriction $E_i: A_i \rightarrow \Z \times \N_i$; $E_i$ has a left inverse $D_i$ over this domain which implies that $E_i$ is injective. Note that $E_i \# P_{X_i} = P_{Z} \otimes P_{N_i} \implies  (\pi^Z \circ E_i) \# P_{X_i} = P_{Z}$. This implies there exists $B \subseteq \Z$ such that for all $i \in [k]$, $\pi^Z \circ E_i: A_i \rightarrow B$ is surjective and $P_{Z}(B) = 1$. Additionally, since $E_i \# P_{X_i} = P_{Z} \otimes P_{N_i}$, there exists $B_i \subseteq B \times \N_i$ with $\pi^{Z} B_i = B$ such that $P_Z \otimes P_{N_i}(B_i) = 1$ and the restriction $E_i: A_i \rightarrow B_i$ is surjective. It follows that $E_i: A_i \rightarrow B_i$ is bijective with inverse $D_i: B_i \rightarrow A_i$. This concludes the proof.
\end{proof}

This lemma implies that for any optimal autoencoder $(E_i, D_i)$ the encoder and decoder are the inverses of each other when the domains and codomains are restricted to $A_i, B_i$. Since all the data falls into these sets with probability one, in the subsequent discussion, for any given optimal $(E_i, D_i)$, we can restrict $\X_i = A_i$, $\Z = B$, and $\Z \times \N_i = B_i$ and assume $E_i = D_i^{-1}$ without loss of generality.

\begin{proof}[Proof of Proposition \ref{app-the:sym}]
It suffices to prove path-consistency for the case of $\ell=3$; for longer sequences the result can be proven by induction. Without loss of generality, consider the sequence $(1,2,3)$. Note that
\begin{align*}
\int_{\X_2} q(x_3|x_2)q(x_2|x_1) dx_2
&= \int_{\X_2} \left[\int_\Z q(x_3|z')q(z'|x_2) dz' \right] \left[ \int_\Z q(x_2|z) q(z|x_1) dz \right] dx_2\\
&= \int_{\X_2} \left[\int_\Z q(x_3|z') \delta_{\pi^Z(E_2(x_2))=z'} dz' \right]
\left[ \int_\Z \int_{\N_2} \delta_{D_2(z,n)=x_2} dN_2(n) q(z|x_1) dz \right] dx_2 \\
&= \int_\Z \int_\Z \int_{\X_2} \int_{\N_2} q(x_3|z') \delta_{\pi^Z(E_2(x_2))=z'} \delta_{D_2(z,n)=x_2} q(z|x_1) dN_2(n) dx_2 dz dz'\\
&= \int_\Z \int_{\Z} \int_{\N_2} q(x_3|z') \delta_{\pi^Z(E_2(D_2(z,n)))=z'} q(z|x_1)dN_2(n) dz dz'\\
&= \int_\Z \int_{\Z} \int_{\N_2} q(x_3|z') \delta_{\pi^Z(z,n)=z'} q(z|x_1)dN_2(n) dz dz'\\
&= \int_\Z \int_{\Z} q(x_3|z') \delta_{z=z'} q(z|x_1) dz dz'\\
&= \int_\Z q(x_3|z') q(z'|x_1) dz'\\
&:= q(x_3|x_1),
\end{align*}
which proves path consistency.

For global consistency, we introduce the following notation: for any conditional probability distribution $Q_{Y_1|Y_2}$ and any probability distribution $P_{Y_2}$ we denote their joint distribution by $Q_{Y_1|Y_2} \otimes P_{Y_2}$. We first show that the joint distributions $Q_{X_i|Z, N_i} \otimes P_Z \otimes P_{N_i}$ and $Q_{Z, N_i|X_i} \otimes P_{X_i}$ are equal over any subsets $S \subseteq \X_i$, $S' \subseteq \Z \times \N_i$:

\begin{align*}
Q_{X_i|Z, N_i} \otimes P_Z \otimes P_{N_i} (S, S') &= \mathbbm{1}_{D(Z, N) \in S} P_Z \otimes P_{N_i} (S')\\
&= P_Z(S' \cup D^{-1}(S))\\
&= P_X(E^{-1}(S' \cap D^{-1}(S)))\\
&= P_X(E^{-1}(S') \cap E^{-1} \circ D^{-1} (S))\\
&= P_X(E^{-1}(S') \cap D \circ D^{-1} (S))\\
&= P_X(E^{-1}(S') \cap S)\\
&= \mathbbm{1}_{E(X) \in S'} P_{X_i} (S)\\
&= Q_{Z, N_i|X_i} \otimes P_{X_i}(S, S')
\end{align*}
By marginalizing out $N_i$ (i.e. taking $S' = A \times \N_i$ for some $A \subseteq \Z$), it follows that
$Q_{X_i|Z} \otimes P_Z =Q_{Z|X_i} \otimes P_{X_i}$. Based on this result, we have for any $i \in [k]$:

\begin{align*}
\bigotimes_{j \in [k]} Q_{X_j | Z} \otimes P_Z
= \bigotimes_{j \in [k], j \neq i} Q_{X_j | Z} \otimes Q_{X_i|Z} \otimes P_Z 
= \bigotimes_{j \in [k], j \neq i} Q_{X_j | Z} \otimes Q_{Z| X_i} \otimes P_{X_i} = Q^{(i)}
\end{align*}
which gives the desired result.
\end{proof}

In general, the optimal solutions to the optimization problem in  (\ref{app-eq_supp}) are not unique because there can potentially be multiple ways to map between $Z$ and $X_i$. However, we can guarantee that the true probabilistic coupling under Assumption \ref{app-ass:sem} is represented by some solution in the optimal set. This is formalized in the following proposition.

\begin{proposition}[Completeness] \label{app-the:exp}
For any probabilistic coupling $P_{{\bf X}}$ satisfying Assumption~\ref{app-ass:sem}, there exists a tuple of autoencoders $(E_i, D_i)_{i \in [k]}$ solving (\ref{app-eq_supp}) such that $P_{\bf X} = Q_i$ for all $i \in [k]$, where $Q_i$ is defined as in Proposition \ref{app-the:sym}.
\end{proposition}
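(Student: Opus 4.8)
The plan is to exhibit a single explicit optimal tuple and verify directly that its induced joint distribution coincides with $P_{\bf X}$. The natural candidate is the one already used to show that the optimal value of (\ref{app-eq_supp}) is zero: for each $i$, take the decoder to be the true generative map, $D_i = f_i$, and let $E_i$ be the corresponding inverse encoder constructed in that lemma (so that $E_i \# P_{X_i} = P_Z \otimes P_{N_i}$ and $D_i \circ E_i(x) = x$ almost surely). Since the objective separates across domains, choosing each pair this way yields an optimal tuple $(E_i, D_i)_{i\in[k]}$, and by global consistency (Proposition \ref{app-the:sym}) it suffices to check $Q^{(i)} = P_{\bf X}$ for a single $i$.

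First I would identify the decoder-induced conditional with the true generative conditional. By definition $q_{X_i|Z}(x|z) = \int_{\N_i} \delta_{D_i(z,n)=x}\, dP_{N_i}(n)$, and with $D_i = f_i$ this is exactly the conditional law of $X_i = f_i(Z,N_i)$ given $Z=z$ under Assumption \ref{app-ass:sem}; that is, $q_{X_i|Z} = p_{X_i|Z}$ for every $i$. Next I would rewrite $q^{(i)}$ using the joint-measure identity established inside the proof of Proposition \ref{app-the:sym}, namely $Q_{X_i|Z} \otimes P_Z = Q_{Z|X_i} \otimes P_{X_i}$. The factor $\prod_{j\neq i} q_{X_j|Z}(x_j|z)$ in the integrand depends only on $z$, so this swap lets me replace $q_{Z|X_i}(z|x_i)\, p_{X_i}(x_i)$ by $q_{X_i|Z}(x_i|z)\, p_Z(z)$ inside the $z$-integral. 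The integrand then becomes $\prod_{j=1}^k q_{X_j|Z}(x_j|z)\, p_Z(z)$, and substituting $q_{X_j|Z} = p_{X_j|Z}$ recovers exactly the SEM factorization of $p_{\bf X}$. Hence $Q^{(i)} = P_{\bf X}$.

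The main obstacle is conceptual rather than computational: because the encoders here are deterministic, the conditional $q_{Z|X_i}$ is a Dirac measure and in general does not equal the (possibly diffuse) true posterior $p_{Z|X_i}$, so the substitution cannot be justified pointwise in $x_i$. The correct justification is at the level of joint measures on $\Z \times \X_i$ --- precisely the equality proved in Proposition \ref{app-the:sym} --- and one must be careful that the remaining factor $\prod_{j\neq i} q_{X_j|Z}$ depends on $z$ alone, so that integrating it against either disintegration of the common joint measure gives the same function of the $x_j$. Once this is in place, the proof is a short chain of identities, and global consistency upgrades the single-$i$ conclusion to $P_{\bf X} = Q_i$ for all $i \in [k]$.
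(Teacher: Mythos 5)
Your proposal is correct and follows essentially the same route as the paper: the paper's (very terse) proof likewise takes the explicit optimal tuple with $D_i = f_i$ and the inverse encoders from the zero-optimal-value lemma, and combines it with the identity $Q^{(i)} = \bigotimes_{j} Q_{X_j|Z} \otimes P_Z$ established in the proof of global consistency, so that $q_{X_j|Z} = p_{X_j|Z}$ yields $Q^{(i)} = P_{\bf X}$. You have simply written out the details (including the correct caveat that the swap $q_{Z|X_i}\,p_{X_i} \leftrightarrow q_{X_i|Z}\,p_Z$ holds as an identity of joint measures rather than pointwise) that the paper leaves implicit.
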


\begin{proof}
The construction of such an optimal solution is given by Lemma \ref{app-lemma_1}. Combining this with the previous proposition yields the desired result.
\end{proof}

\begin{proposition} Let $Q_{X_{i \rightarrow j}}$ denote the distribution of $X_{i \rightarrow j}:= D_j(\pi^Z(E_i(X_i)), N_j)$. If the decoder $D_j$ is $\gamma_j$-Lipschitz, then the 1-Wasserstein distance $W(Q_{X_{i \rightarrow j}}, P_{X_j})$ satisfies
\begin{align*}
W(Q_{X_{i \rightarrow j}}, P_{X_j}) &\leq \gamma_j W(E_i \# P_{X_i}, P_Z \otimes P_{N_i}) \\
&+ \gamma_j W(P_Z \otimes P_{N_j}, E_j \# P_{X_j}) \\
&+ \mathbb{E}_{x \sim P_{X_j}} L_1(x, D_j\circ E_j(x)).
\end{align*}
\end{proposition}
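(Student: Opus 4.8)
The plan is to bound $W(Q_{X_{i\to j}}, P_{X_j})$ by inserting well-chosen intermediate distributions and applying the triangle inequality for the $1$-Wasserstein metric. First I would note that the transported distribution is a pushforward through the decoder: writing $\hat P_{Z_i} := \pi^Z \circ E_i \# P_{X_i}$ for the encoded latent distribution of domain $i$, the independence of $X_i$ and $N_j$ gives $Q_{X_{i\to j}} = D_j \#\bigl(\hat P_{Z_i} \otimes P_{N_j}\bigr)$. The two other natural anchor points are the ideal decoding of the target latent $D_j\#(P_Z\otimes P_{N_j})$ and the reconstruction of the target $D_j\#(E_j\#P_{X_j})$. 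The two facts I would use repeatedly are: (i) if $g$ is $\gamma$-Lipschitz then $W(g\#\mu, g\#\nu) \le \gamma\, W(\mu,\nu)$, and (ii) the infimum over couplings is upper bounded by the cost of any explicit coupling.

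Next I would chain the triangle inequality through $Q_{X_{i\to j}} \to D_j\#(P_Z\otimes P_{N_j}) \to D_j\#(E_j\#P_{X_j}) \to P_{X_j}$, producing three summands bounded separately. For the second summand, both arguments are pushforwards of $D_j$, so fact (i) with the $\gamma_j$-Lipschitz hypothesis immediately gives $\gamma_j\, W(P_Z\otimes P_{N_j}, E_j\#P_{X_j})$, the second term in the statement. For the third summand, $W(D_j\#(E_j\#P_{X_j}), P_{X_j})$, I would use the explicit coupling $x \mapsto (D_j\circ E_j(x),\, x)$ induced by $P_{X_j}$; its transport cost is exactly $\mathbb{E}_{x\sim P_{X_j}} L_1(x, D_j\circ E_j(x))$, which by fact (ii) gives the third term.

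The remaining first summand is where the real work is. Both arguments are again pushforwards of $D_j$, so fact (i) yields $\gamma_j\, W(\hat P_{Z_i} \otimes P_{N_j},\, P_Z \otimes P_{N_j})$. I would then strip the shared factor $P_{N_j}$ using the product coupling that is diagonal on the common $\N_j$ marginal and optimal on $\Z$, which gives $W(\hat P_{Z_i} \otimes P_{N_j},\, P_Z \otimes P_{N_j}) \le W(\hat P_{Z_i}, P_Z)$. Finally, since $\pi^Z$ is $1$-Lipschitz and $\pi^Z\#(P_Z\otimes P_{N_i}) = P_Z$, fact (i) gives $W(\hat P_{Z_i}, P_Z) = W\bigl(\pi^Z\#(E_i\#P_{X_i}),\, \pi^Z\#(P_Z\otimes P_{N_i})\bigr) \le W(E_i\#P_{X_i},\, P_Z\otimes P_{N_i})$, which is the first term.

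The main obstacle is precisely this first summand, because the source and target use different noise variables ($N_i$ versus $N_j$) and a projection intervenes. The care needed is in justifying the two reductions cleanly — removing the common $P_{N_j}$ factor and passing through $\pi^Z$ — since this is where the $N_i$/$N_j$ mismatch is absorbed: the surviving quantity compares $E_i\#P_{X_i}$ against $P_Z\otimes P_{N_i}$ in the full latent-times-noise space, even though the transport itself discards $N_i$ and injects a fresh $N_j$. All other steps are direct applications of the Lipschitz-pushforward inequality and the coupling bound, so no further difficulty is expected.
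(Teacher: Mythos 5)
Your proposal is correct and follows essentially the same route as the paper: the paper also chains the triangle inequality through $D_j\#\bigl(P_Z\otimes P_{N_j}\bigr)$ and the reconstruction distribution $D_j\#\bigl(E_j\#P_{X_j}\bigr)$, applies the Lipschitz-pushforward inequality for $D_j$, and absorbs the $N_i$/$N_j$ mismatch exactly as you do. If anything, you spell out the step $W\bigl(\hat P_{Z_i}\otimes P_{N_j}, P_Z\otimes P_{N_j}\bigr)\le W\bigl(\hat P_{Z_i},P_Z\bigr)\le W\bigl(E_i\#P_{X_i},P_Z\otimes P_{N_i}\bigr)$ more carefully than the paper, which passes over it in a single line.
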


\begin{proof}
The proof is obtained using the triangle inequality and Lipschitz property of Wasserstein metrics, which can be found in \cite{patrini2018sinkhorn}. By applying these properties we get:
\begin{align*}
W(Q_{X_{i \rightarrow j}}, P_{X_j}) 
&\leq W(Q_{X_{i \rightarrow j}}, Q_{X_{j \rightarrow j}}) 
+ W(Q_{X_{j \rightarrow j}}, P_{X_j})\\
&\leq \gamma_j W((\pi^\Z \circ E_i \# P_{X_i}) \otimes P_{N_j}, E_j \# P_{X_j})
+ W(Q_{X_{j \rightarrow j}}, P_{X_j})\\
&\leq \gamma_j \left[ W((\pi^\Z \circ E_i \# P_{X_i}) \otimes P_{N_j}, P_Z \otimes P_{N_j}) + W(P_Z \otimes P_{N_j}, E_j \# P_{X_j}) \right] 
+ W(Q_{X_{j \rightarrow j}}, P_{X_j})\\
&\leq \gamma_j \left[ W(E_i \# P_{X_i}, P_Z \otimes P_{N_i}) + W(P_Z \otimes P_{N_j}, E_j \# P_{X_j}) \right] 
+ W(Q_{X_{j \rightarrow j}}, P_{X_j})\\
&\leq \gamma_j \left[ W(E_i \# P_{X_i}, P_Z \otimes P_{N_i}) + W(P_Z \otimes P_{N_j}, E_j \# P_{X_j}) \right] 
+ \mathbb{E}_{x \sim P_{X_j}} L_1(x, D_j\circ E_j(x)),
\end{align*}
which concludes the proof.
\end{proof}

\section{Supplement to Numerical Experiments.}

{\bf Genomics Datasets.} To illustrate a biological application of the translation model from our genomics experiment, we performed an analysis of correspondence between the gene expression (from RNA-seq data) and chromatin signatures (from ChIP-seq data). Specifically, we filtered for genes and ChIP signatures that are significantly correlated with the same latent dimension. Figure \ref{app-fig:genomics2}a shows some examples of cellular processes based on the GOrilla gene ontology \cite{eden2009gorilla} analysis of genes correlated with the same latent dimension as the chromatin signatures in Figure \ref{app-fig:genomics2}b. A few notable correlations that were statistically significant and are corroborated by biological evidence include the following:

\begin{itemize}
\item There were several cellular processes associated with pluripotency with significant p-values: stem cell population maintenance ($p = 1.05 \times 10^{-7}$), stem cell differentiation ($p = 2.13 \times 10^{-7}$), somatic stem cell population maintenance ($p = 4.35 \times 10^{-4}$), and stem cell division ($p = 4.37 \times 10^{-4}$). These correspond with ChIP signatures such as Oct4, Sox2, and Nanog, which are known to regulate pluripotency \cite{kashyap2009regulation}.
\item Several processes associated with histone modification with significant p-values, including and not limited to: regulation of histone modification ($p=1.59 \times 10^{-7}$), histone acetylation ($p=5.16\times 10^{-7}$), regulation of histone methylation ($p=3.38 \times 10^{-5}$). These correspond with ChIP signatures based on histone markers such as H3K4me1, H3K4me3, H3K9ac, etc.
\item Processes related to chromosomal regulation and cell division, including and not limited to: protein localization to chromosomes ($p=2.77 \times 10^{-6}$), protein localization to the centromeric region of chromosomes ($p=3.83 \times 10^{-4}$), cell division $p=4.02 \times 10^{-7}$. These correspond with ChIP signatures NIPBL, Smc1, Smc3, etc. which are known to regulate chromosomes during cell division \cite{zuin2014cohesin, eijpe2000association}.
\end{itemize}

\begin{figure}
\centering
\subfigure[]{\includegraphics[scale=.3]{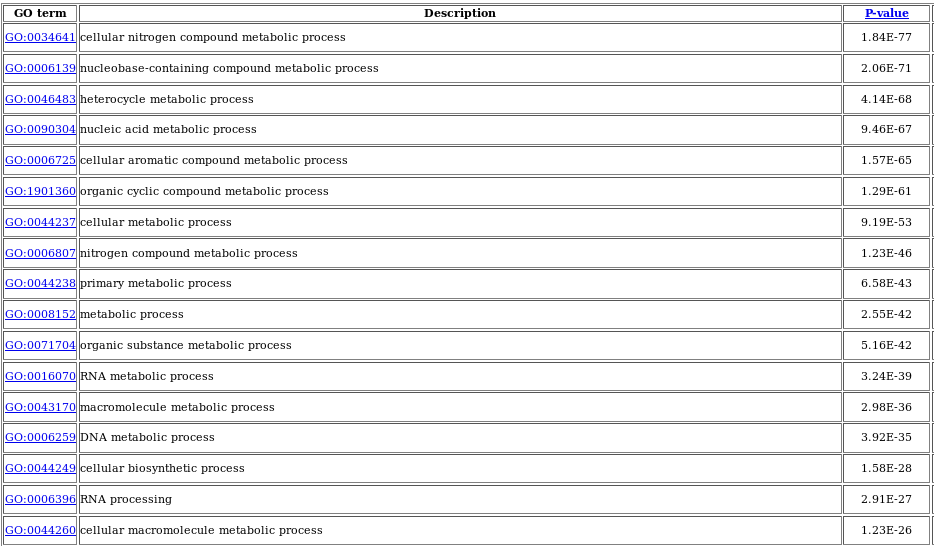}} \hspace{0.7cm}
\subfigure[]{\includegraphics[scale=.2]{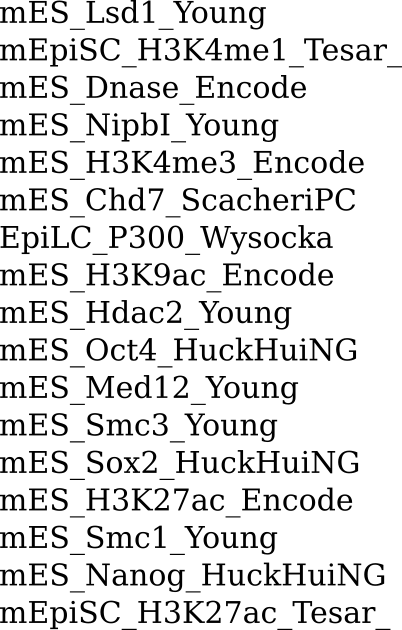}}
\caption{(a) Examples of gene ontology terms strongly correlated with the latent dimension based on the GOrilla analysis tool \cite{eden2009gorilla} (b) Examples of chromatin signatures strongly correlated with the latent dimension; descriptions can be found in \cite{rotem2015single}}
\label{app-fig:genomics2}
\end{figure}



\end{document}